\newtheorem{lem}{Lemma}[section]
\newtheorem{thm}[lem]{Theorem}
\newtheorem{cor}[lem]{Corollary}
\theoremstyle{definition}
\newtheorem{theorem}{Theorem}[section]
\newtheorem{corollary}[theorem]{Corollary}
\newcommand{\tr}{\text{tr}\,}      	    
\def\benm{\begin{enumerate}}            
	\def\eenm{\end{enumerate}}              
\newcommand{\tb}{\textbf}
\newcommand{\al}{\alpha}
\newcommand{\be}{\beta}
\begin{document}

\title{Transport Model for Feature Extraction}
\author{Wojciech Czaja}
\address{CSCAMM and Department of Mathematics\\
University of Maryland\\
College Park, MD 20742, USA}
\email{wojtek@math.umd.edu}

\author{Dong Dong}
\address{CSCAMM\\
University of Maryland\\
College Park, MD 20742, USA}
\email{ddong12@cscamm.umd.edu}

\author{Pierre-Emmanuel Jabin}
\address{CSCAMM and Department of Mathematics\\
University of Maryland\\
College Park, MD 20742, USA}
\email{pjabin@umd.edu}

\author{Franck Olivier Ndjakou Njeunje}
\address{Department of Mathematics\\
University of Maryland\\
College Park, MD 20742, USA}
\email{fndjakou@math.umd.edu}

\begin{abstract}
	We present a new feature extraction method for complex and large datasets, based on the concept of transport operators on graphs. The proposed approach generalizes and extends the many existing data representation methodologies built upon diffusion processes, to a new domain where dynamical systems play a key role. The main advantage of this approach comes from the ability to exploit different relationships than those arising in the context of e.g., Graph Laplacians. Fundamental properties of the transport operators are proved. We demonstrate the flexibility of the method by introducing several diverse examples of transformations. We close the paper with a series of computational experiments and applications to the problem of classification of hyperspectral satellite imagery, to illustrate the practical implications of our algorithm and its ability to quantify new aspects of relationships within complicated datasets.
\end{abstract}

\maketitle


\let\thefootnote\relax\footnote{\emph{Key words}:   feature extraction, dimension reduction, machine learning, semi-supervised, transport operator, advection.}
\let\thefootnote\relax\footnote{\emph{2010 Mathematics Subject Classification}: 68Q25, 68R10, 68U05}
\let\thefootnote\relax\footnote{ WC was partially supported by  LTS grants DO 0048-0049-0050-0052 and D00030014. DD was partially supported by  LTS grant DO 0052. PEJ was partially supported by NSF Grant 161453, NSF Grant RNMS (Ki-Net) 1107444 and by LTS grants DO 0048-0049-0050-0052 and D00030014. FNN was partially supported by LTS grants DO 0048-0049-0050 and 0052.}

\section{Introduction} \label{sec: introduction}

Feature extraction has been at the core of many data science applications for more than a century. The goal of feature extraction is to derive new measurements (or features) from an initial set of measure data with the intention of retaining the core information while eliminating redundancies. A well-known feature extraction algorithm is principal components analysis (PCA) which can be traced back to the year 1901~\cite{pearson1901lines}. However, due to the linear nature of PCA, the method falls short in capturing the intrinsic structure of the data when a non-linear relationship governs the underlying structure within the data. Since then, the complex, non-linear, and growing amount of data have led scientists to come up with new techniques. A few well-known techniques are: kernel PCA~\cite{scholkopf1997kernel}, isomap~\cite{tenenbaum2000global}, locally linear embedding (LLE)~\cite{roweis2000nonlinear}, and Laplacian eigenmaps (LE)~\cite{belkin2002laplacian}. Today, the use of feature extraction techniques varies based on applications from the classification of hyperspectral images~\cite{benedetto2009frame,sun2014nonlinear,sun2014ul,zhao2016spectral} to the prediction of stock market prices~\cite{zhong2017forecasting}.

The aforementioned non-linear feature extraction methods lead to applications of linear operators, e.g., the Laplacian. In the present study, we have developed a more general approach that constructs non-linear feature extraction algorithms based on non-linear operators, such as appropriately chosen transport by advection operators. A recent technique~\cite{gerber2017multiscale} sought to find the optimal transport method between two point sets based on an adaptive multiscale decomposition, which itself is derived from diffusion wavelets and diffusion maps. In our work, we focus on the transport operator directed by by velocity fields ~\cite{bennett2012transport,hundsdorfer2013numerical,vreugdenhil1993numerical}, because of its well-studied properties as well as its partial similarity to Schroedinger Eigenmaps method~\cite{czaja2013schroedinger}. This transport model has not been used in the literature as a tool for building a feature extraction algorithm. Nevertheless, some related work can be found in the fields of water resource management and in bio-medical research~\cite{hansen2016reduced}, where feature extraction is used to construct simplified transport models for cardiovascular flow.

At its core, our work will focus on exploring and exploiting the differences and similarities of this novel approach to the state-of-the-art feature extraction algorithms found in the literature. After providing some background in Section~\ref{sec: background}, we introduce the model in Section~\ref{sec:transport} together with some properties of the model. The algorithm for our approach is given in Section~\ref{sec:transport_algorithm}, and we provide an application of our algorithm for feature extraction and subsequent classification of hyperspectral image data in Section~\ref{sec:hyperspectral_exp} and Section~\ref{sec:results}. Some open problems are posed in the last section.

\section{Background} \label{sec: background}

In many data science applications, high dimensional data tend to lie on low dimensional manifolds within the high dimensional space. To take advantage of this information, methods such as the Laplacian eigenmaps (LE)~\cite{belkin2002laplacian} and the Schroedinger eigenmaps (SE)~\cite{czaja2013schroedinger}, invoke the adjacency graph constructed from a set of initial points, $X = \lbrace \mathbf{x}_{1},\mathbf{x}_{2},\ldots,\mathbf{x}_{n} \rbrace$ in $\mathbb{R}^{d}$, in order to extract the most important features from the bunch.

In LE, 
the problem is reduced to solving the following generalized eigenvector problem (after building a weighted graph $G$ from the $n$ points),
\begin{eqnarray} \label{eq: LE Gen eig problem}
	L\, \mathbf{f} = \lambda\, D \,\mathbf{f},
\end{eqnarray}
where $L = D-W$, viz., the Laplacian matrix, with $W$ representing the (symmetric) weight matrix $(w_{ij})$ and $D$ the diagonal matrix with entries $d_{ii} = \sum_{j} w_{ij}$.
Let $\lbrace \mathbf{f}_{0},\mathbf{f}_{1},\ldots,\mathbf{f}_{n-1} \rbrace$ be the solution set to (\ref{eq: LE Gen eig problem}) written in ascending order according to their eigenvalues. The $m$-dimensional Euclidean space mapping is given by
\[ \mathbf{x}_{i} \rightarrow [\mathbf{f}_{1}(i), \mathbf{f}_{2}(i), \ldots, \mathbf{f}_{m}(i)]. \]

In SE, 
the $m$-dimensional Euclidean space mapping is given in a similar manner. Dubbed as a generalization of the LE algorithm, SE uses partial knowledge about the data set $X$ and fuses this information into the LE algorithm to obtain better representation or more desirable results. Additional work related to data fusion can be found in the following papers~\cite{benedetto2012semi,cloninger2017pre, doster2014harmonic, halevy2011extensions}. The problem in SE is reduced to solving the following generalized eigenvector problem,
\begin{eqnarray} \label{eq: Sch Gen eig problem}
	S\, \mathbf{f} = \lambda\, D\, \mathbf{f},
\end{eqnarray}
where $S = L + \alpha V$, viz., the Schroedinger matrix, with $V$ as the potential matrix encoding the partial information and $\alpha$ as a real parameter keeping the balance between the matrices $L$ and $V$.

The algorithm we are developing in this article, viz., transport eigenmaps (TE), has some similarities to SE in the sense that both algorithms use extra information about the data set to define a generalization of LE. 
Unlike supervised learning techniques which assume prior knowledge of the ground truth, i.e., knowledge of what the output values for our samples should be, SE and TE only assumes partial knowledge of said ground truth. This puts SE and TE in a class of machine learning techniques between supervised learning and unsupervised learning (no prior knowledge) called semi-supervised learning (see \cite{belkin2004semi,davidson2009knowledge,fang2017hyperplane, wagstaff2001constrained, zhang2007semi} for more examples).
While SE uses potentials to encode to additional information, TE may use advection (the active transportation of a distribution by a flow field) or measure/weight modifiers. In contrast to SE, TE could come from a non-linear operator which we will describe in section~\ref{sec:transport}. 
\section{The transport model} \label{sec:transport}
Transport operators have been used in modeling and analyzing data in a variety of fields \cite{bartsch1989modelling,brogniez2009,Brutsaert1979evap,Guo2014flow,Shen1996visu, sibert1999fish, Song2003expe}. We aim to bring this idea into the graph setting to help with data representation.  
\subsection{Notation and introduction}
We first briefly present the basic setting for studying transport model on graphs. Fix a weighted simple graph $G$ with $n$ nodes. Let $v$ be a function defined on the edges of $G$. As an $n\times n$ matrix, $v$ is assumed to be anti-symmetric since it will be used to model a velocity field. The transport operator $T$ acting on a vector $\tb y$ is formally defined as
\begin{equation} \label{eq: def of T}
T\,\tb y=L\,\tb y-\text{div}(v\tb y).
\end{equation}
This agrees with the continuous transport operator.

We use the following rules to translate between the continuous and discrete settings. For any matrix $A$, which is viewed as a function defined on the edges, the divergence of $A$ is a function defined on nodes, i.e., is a vector:
\begin{equation}
\text{div}(A)_i:=\sum_j A_{ij}.
\end{equation}
When $A$ models a velocity field on the graph, $\text{div}(A)_i$ is just the net flow coming from or to the node $i$. 

For any function $f$ defined on the nodes, its gradient, the dual operator of the divergence, is defined on the edges
\begin{equation}
(\nabla f)_{ij}:=(f_j-f_i)w_{ij}.
\end{equation}

A matrix $A$ (e.g., a velocity field) can act on a vector $f$ (e.g., a probability distribution) in the following way
$$(Af)_{ij}=(fA)_{ij}:=\frac{f_i+f_j}{2}A_{ij}.$$
This corresponds to the standard centered discretization of the transport operator (after taking the divergence).
 
The Laplacian of $f$, $\Delta f:=\text{div}(\nabla f)$, is defined on the nodes:
\begin{equation}
(\Delta f)_i=\sum_j (f_j-f_i)w_{ij}.
\end{equation}
This differs from the graph Laplacian $L$ by a sign, as we prefer to have positive semi-definite graph Laplacian.

Based on the above rules, we have $(v\tb y)_{ij}=v_{ij}\frac{y_i+ y_j}{2}$ and $\text{div}(v\tb y)=\sum_j(v\tb y)_{ij}=\frac{1}{2}\sum_j ( y_i+ y_j)v_{ij}$. Therefore, the definition of $T$ \eqref{eq: def of T} becomes
\begin{equation} \label{eq: ori transport}
(T\,\tb y)_i=\sum_j ( y_i- y_j)w_{ij}-\sum_j ( y_i+y_j)\frac{v_{ij}}{2}.
\end{equation}
It is unclear from the expression \eqref{eq: ori transport} that a transport operator $T$ would always produce real eigenvalues as the Laplacian and Schroedinger operators do. We will address this issue in the next subsection.

\subsection{Self-adjointness}
 As the properties of the transport operator
ultimately depend on $v$, an anti-symmetric matrix, we aim to find $v$'s so that the corresponding transport operator $T$ is self-adjoint (probably with respect to a non-standard inner product). 

For any positive definite matrix $A$, we use $\langle , \rangle_A$ to denote the inner product
\begin{equation*}
\langle \tb y,\,\tb z\rangle_A:=\tb y^tA\tb z.
\end{equation*}
When $A$ is the identity matrix, this agrees with the standard inner-product. 

It is natural to assume that $v_{ij}=0$ if the nodes $i$ and $j$ are not connected, as $w_{ij}=0$ in this case as well. Let 
$$
\bar v_{ij}:=\frac{v_{ij}}{2w_{ij}} \text{ if $i$ and $j$ are connected},
$$
and $\bar v_{ij}=0$ otherwise. Then $\bar v$ is also anti-symmetric, $\frac{v_{ij}}{2}=\bar v_{ij}w_{ij}$, and
\begin{align*}
(T\,\tb y)_i&=\sum_j (y_i-y_j)\,w_{ij}-\sum_{j}(y_i+y_j)\,\bar v_{ij}\,w_{ij}\\
&=\sum_j [(1-\bar v_{ij})\,y_i-(1+\bar v_{ij})\,y_j]\,w_{ij}
\end{align*}

Our goal is to find a positive definite matrix $X$ such that $T$ is self-adjoint with respect to $\langle, \rangle_X$. It turns out $\bar v_{ij}=\frac{a_j-a_i}{a_j+a_i}$ is a natural choice (see the Supplementary Materials for a discussion and comparison with another choice $\bar v_{ij}=a_j-a_i$). 
\begin{thm} \label{main thm}
	Let $W=(w_{ij})$ be a symmetric matrix. Assume $\bar v_{ij}=\frac{a_j-a_i}{a_j+a_i}$ for some positive $a_i$'s. Then the operator $(T\, \tb y)_i=\sum_j[y_i-y_j-\bar v_{ij}(y_i+y_j)]w_{ij}$ is self-adjoint with respect to the inner product $\langle,\rangle_X$, with $X=\text{diag}(ca_i)$ for some positive $c$.
\end{thm}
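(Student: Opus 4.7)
The plan is to recast self-adjointness as a matrix identity and then verify it by a direct computation that uses the specific algebraic form of $\bar v_{ij}$. Recall that $T$ is self-adjoint with respect to $\langle\cdot,\cdot\rangle_X$ precisely when $(XT)^t = XT$, i.e., when the matrix $XT$ is symmetric. Since $X=\mathrm{diag}(ca_i)$ is automatically positive definite for any $c>0$ and $a_i>0$, the entire content of the theorem reduces to checking this symmetry.

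First I would read off the matrix entries of $T$ from the expression $(T\mathbf{y})_i=\sum_j[(1-\bar v_{ij})y_i-(1+\bar v_{ij})y_j]w_{ij}$. Assuming no self-loops ($w_{ii}=0$), the off-diagonal entries are
\begin{equation*}
T_{ij}=-(1+\bar v_{ij})\,w_{ij}, \qquad i\neq j,
\end{equation*}
while the diagonal entries $T_{ii}=\sum_k(1-\bar v_{ik})w_{ik}$ are irrelevant to the symmetry test. Multiplying on the left by $X$ gives $(XT)_{ij}=-ca_i(1+\bar v_{ij})w_{ij}$ for $i\neq j$.

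The key step is the algebraic identity coming from the specific ansatz $\bar v_{ij}=\frac{a_j-a_i}{a_j+a_i}$:
\begin{equation*}
1+\bar v_{ij}=\frac{(a_i+a_j)+(a_j-a_i)}{a_i+a_j}=\frac{2a_j}{a_i+a_j}.
\end{equation*}
Substituting yields
\begin{equation*}
(XT)_{ij}=-\frac{2c\,a_i a_j}{a_i+a_j}\,w_{ij},
\end{equation*}
which is manifestly symmetric in the pair $(i,j)$ since $w_{ij}=w_{ji}$ and the prefactor is symmetric in $a_i,a_j$. The same expression can be obtained by writing $1-\bar v_{ji}=\frac{2a_i}{a_i+a_j}$ and checking the transpose entry $(XT)_{ji}$, which confirms the computation.

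I do not anticipate a real obstacle; the proof is essentially the two-line identity above, and the only thing to be slightly careful about is the diagonal, which contributes nothing to the symmetry condition, and the edge case $w_{ii}=0$ (built into the simple-graph assumption) which makes $\bar v_{ii}$ irrelevant. What this proof really highlights is why the quotient form $\frac{a_j-a_i}{a_j+a_i}$ is the natural choice: it is exactly the anti-symmetric function whose shifts $1\pm\bar v_{ij}$ produce the symmetric weight $\frac{2a_i a_j}{a_i+a_j}$ after being paired with the diagonal weights $a_i,a_j$, so the weighted matrix $XT$ becomes symmetric for free.
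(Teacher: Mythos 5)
Your proof is correct and is essentially the paper's argument in matrix form: the paper expands $\langle T\mathbf{y},\mathbf{z}\rangle_X$ and $\langle \mathbf{y},T\mathbf{z}\rangle_X$ and reduces self-adjointness to the edge-wise condition $(1-\bar v_{ij})x_j=(1+\bar v_{ij})x_i$, which is exactly your symmetry condition $(XT)_{ij}=(XT)_{ji}$ once one notes $1+\bar v_{ji}=1-\bar v_{ij}$. Both proofs then close with the same identity $1\pm\bar v_{ij}=\frac{2a_{j}}{a_i+a_j}$ or $\frac{2a_i}{a_i+a_j}$, so there is no gap.
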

\begin{proof}
	For the convenience of future discussion, denote $X=\text{diag}(x_i)$ and we try to ``solve'' for $x_i$. In general, $X$ could be non-diagonal. We need to verify that for any vectors $\tb y$ and $\tb z$
	\begin{equation} \label{goal of self ajoint}
	\sum_i (T\,\tb y)_i \,z_i \, x_i=\sum_iy_i\, (T\,\tb z)_i \, x_i
	\end{equation}
	The left-hand-side (LHS) of \eqref{goal of self ajoint} is
	\begin{align*}
		\sum_i (T\,\tb y)_i\, z_i\,x_i&=\sum_{i,j}[(1-\bar v_{ij})\,y_i-(1+\bar v_{ij})\, y_j]\,z_i \, x_i\, w_{ij}\\
		&=\sum_{ij}[(1-\bar v_{ij})\,y_i\,z_i\,x_i-(1-\bar v_{ij})\,y_i\,z_j\,x_j]\,w_{ij}\\
		&=\sum_i y_i\,\sum_j [(1-\bar v_{ij})\,z_i\,x_i-(1-\bar v_{ij})\,z_j\,x_j]\,w_{ij}
	\end{align*}
	Compare this with the right-hand-side (RHS) of \eqref{goal of self ajoint}
	\begin{equation*}
		\sum_iy_i\,(T\tb z)_i\,x_i=\sum_i y_i\,\sum_j [(1-\bar v_{ij})\,z_i\,x_i-(1+\bar v_{ij})\,z_j\,x_i]\,w_{ij},
	\end{equation*}
	and we see that in order to make \eqref{goal of self ajoint} hold,
	\begin{equation} \label{condition}
	(1-\bar v_{ij})\,x_j=(1+\bar v_{ij})\,x_i
	\end{equation}
	must be true for any pair of connected nodes $i$ and $j$. 
	
	Now make use of the assumption $\bar v_{ij}=\frac{a_j-a_i}{a_j+a_i}$. In this case, the key condition \eqref{condition} becomes
	\begin{equation*}
		\left(1-\frac{a_j-a_i}{a_j+a_i}\right)\, x_j=\left(1+\frac{a_j-a_i}{a_j+a_i}\right)\, x_i,
	\end{equation*}
	which is 
	\begin{equation*}
		a_i\, x_j=a_j\, x_i.
	\end{equation*}
	This clearly holds as $x_i=ca_i$ by the assumption of the theorem. 
\end{proof}

We can immediately extend this theorem to a more general model by introducing a symmetric matrix $r$. This new collection of parameters will allow us to implement the transport eigenmap method in various settings.
\begin{thm} \label{coro about Tvr}
	Let $r=(r_{ij})$ and $W=(w_{ij})$ be symmetric matrices. Define $T_v^r$ to be the operator such that
	\begin{equation}
	(T_v^r\,\tb y)_i=\sum_j[r_{ij}\,(y_i-y_j)-\bar v_{ij}\,(y_i+y_j)]\,w_{ij}.    
	\end{equation}
	Assume $\bar v_{ij}=\frac{a_j-a_i}{a_j+a_i}\,r_{ij}$ for some positive $a_i$'s. Then $T_v^r$ is self-adjoint with respect to the inner product $\langle,\rangle_X$, with $X=\text{diag}(ca_i)$ for some positive $c$.
\end{thm}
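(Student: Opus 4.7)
The plan is to reduce Theorem \ref{coro about Tvr} directly to Theorem \ref{main thm} by absorbing the symmetric matrix $r$ into the graph weights. Specifically, I would set $\tilde{w}_{ij} := r_{ij}\, w_{ij}$; because $r$ and $W$ are both symmetric, the new matrix $\tilde W = (\tilde w_{ij})$ is also symmetric, hence a legitimate weight matrix in the sense of Theorem \ref{main thm}.

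Next, using the hypothesis $\bar v_{ij} = \frac{a_j - a_i}{a_j + a_i}\, r_{ij}$, I would factor $r_{ij}$ out of each term in the definition of $T_v^r$:
\begin{align*}
(T_v^r\, \tb y)_i &= \sum_j \bigl[r_{ij}(y_i - y_j) - \bar v_{ij}(y_i + y_j)\bigr] w_{ij} \\
&= \sum_j \left[(y_i - y_j) - \frac{a_j - a_i}{a_j + a_i}(y_i + y_j)\right] r_{ij}\, w_{ij} \\
&= \sum_j \bigl[(y_i - y_j) - \bar v'_{ij}(y_i + y_j)\bigr]\, \tilde w_{ij},
\end{align*}
where $\bar v'_{ij} := \frac{a_j - a_i}{a_j + a_i}$. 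The last line is exactly the operator $T$ of Theorem \ref{main thm} built on the symmetric weight matrix $\tilde W$ with the same positive parameters $a_i$. Applying Theorem \ref{main thm} then immediately yields self-adjointness with respect to $\langle,\rangle_X$ with $X = \text{diag}(c a_i)$ for some positive $c$.

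I do not anticipate a genuine obstacle: the only two things to verify are that the Hadamard product of symmetric matrices is symmetric (immediate) and that the algebraic regrouping above is correct. As a safety check — and in case one prefers a self-contained derivation — I would redo the computation from the proof of Theorem \ref{main thm}, carrying an extra factor of $r_{ij}$ inside each sum. The ``key condition'' \eqref{condition} would then take the form $(1 - \bar v'_{ij})\, x_j = (1 + \bar v'_{ij})\, x_i$ with the same $\bar v'_{ij} = \frac{a_j - a_i}{a_j + a_i}$, forcing once again $x_i = c\, a_i$. The mildest conceptual point is simply recognizing that the introduction of $r$ amounts to a re-weighting rather than a genuinely new structure, after which no further calculation is needed.
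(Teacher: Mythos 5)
Your reduction is correct and is exactly the paper's argument: the published proof likewise absorbs the symmetric matrix $r$ into the symmetric weight matrix (i.e.\ replaces $w_{ij}$ by $r_{ij}w_{ij}$) and then invokes Theorem \ref{main thm} with $\bar v_{ij}$ replaced by $\frac{a_j-a_i}{a_j+a_i}$. Your version just spells out the factorization explicitly, which is fine; nothing further is needed.
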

\begin{proof}
     Simply notice that the symmetric matrix $r$ can be incorporated into the symmetric matrix $W$ and thus the operator $T_v^r$ has the same form as $T$ in Theorem \ref{main thm}.
\end{proof}

When $\bar v_{ij}=\frac{a_j-a_i}{a_j+a_i}\,r_{ij}$, the general transport operator $T_v^r$ can be rewritten as 
\begin{equation} \label{def: general transport}
	(T_v^r\,\tb y)_i=\sum_j \left(\frac{2a_i}{a_i+a_j}\,y_i-\frac{2a_j}{a_i+a_j}\,y_j\right)\,w_{ij}=\sum_j(a_iy_i-a_jy_j)\,w_{ij}\,\frac{2r_{ij}}{a_i+a_j}.
\end{equation}
This expression also indicates that $T_v^r$ is non-negative when $\bar v_{ij}=\frac{a_j-a_i}{a_j+a_i}\,r_{ij}$.
\begin{thm} \label{thm: nonnegative}
	The operator defined by \eqref{def: general transport} is non-negative in $\ell^2_{X}$, where $X=\text{diag}(ca_i)$ for some positive $c$. More precisely, 
	\begin{equation} \label{eq: nonnegative of T}
		\langle \tb y,\; T_v^r\,\tb y\rangle_X=\frac{c}{2}\sum_{i,j}(\tilde{y}_i-\tilde{y}_j)^2\,\tilde{w}_{ij}\ge 0,
	\end{equation}
    with $\tilde{w}_{ij}:=w_{ij}\,\frac{2r_{ij}}{a_i+a_j}$ and $\tilde{y}_i:=a_i\,y_i$. In particular, $T_v^r\,\tb y=0$ iff the quantity $a_iy_i$ is constant on every connected component of the graph. 
\end{thm}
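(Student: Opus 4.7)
The plan is to compute the quadratic form $\langle \tb y, T_v^r\,\tb y\rangle_X$ directly from the compact expression \eqref{def: general transport} and then apply the standard graph-Laplacian symmetrization trick (swap the summation indices and average) to produce the sum of squares on the right-hand side of \eqref{eq: nonnegative of T}. Non-negativity will then be immediate from the positivity of the weights, and the equality case will follow from self-adjointness.

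In more detail, I would start from $\langle \tb y, T_v^r\,\tb y\rangle_X = c\sum_i a_i\,y_i\,(T_v^r\,\tb y)_i$ and substitute the right-hand form of \eqref{def: general transport}, namely $(T_v^r\,\tb y)_i = \sum_j (a_i y_i - a_j y_j)\,\tilde w_{ij}$ with $\tilde w_{ij} := w_{ij}\,\frac{2r_{ij}}{a_i+a_j}$. Setting $\tilde y_i := a_i y_i$ turns the double sum into $c\sum_{i,j}\tilde y_i(\tilde y_i-\tilde y_j)\,\tilde w_{ij}$. Because $W$, $r$, and the denominator $a_i+a_j$ are each symmetric in $i,j$, so is $\tilde w_{ij}$; swapping the dummy indices yields $c\sum_{i,j}\tilde y_j(\tilde y_j - \tilde y_i)\,\tilde w_{ij}$, and averaging the two expressions collapses the cross terms into $\tfrac12(\tilde y_i - \tilde y_j)^2$, which is exactly \eqref{eq: nonnegative of T}. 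Under the natural assumptions $w_{ij},r_{ij}\ge 0$ and $a_i>0$ every $\tilde w_{ij}$ is non-negative, so $\langle \tb y, T_v^r\,\tb y\rangle_X\ge 0$.

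For the characterization of the kernel I would argue as follows. Theorem~\ref{coro about Tvr} ensures $T_v^r$ is self-adjoint on $\ell^2_X$, and the identity just proved shows it is non-negative there. Thus $T_v^r\,\tb y = 0$ iff $\langle \tb y, T_v^r\,\tb y\rangle_X = 0$, since a non-negative self-adjoint operator has the same kernel as its associated quadratic form. The identity \eqref{eq: nonnegative of T} then forces $\tilde y_i = \tilde y_j$ whenever $\tilde w_{ij}>0$, i.e.\ along every edge on which $w_{ij}r_{ij}>0$; propagating this equality through a connected component of the effective graph yields that $a_i y_i$ is constant on that component, which is the stated characterization.

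The main obstacle is mostly conceptual rather than computational: the symmetrization collapses the form into a sum of squares only because the factor $1/(a_i+a_j)$ built into $\tilde w_{ij}$ is symmetric in $i,j$, which itself is a consequence of the very special choice $\bar v_{ij} = \frac{a_j-a_i}{a_j+a_i}\,r_{ij}$ made in Theorem~\ref{coro about Tvr}. With any other choice of $\bar v$ the same computation would leave an uncontrolled cross term, so the non-negativity should be seen as a ``reward'' for the earlier algebraic normalization rather than a separate structural fact.
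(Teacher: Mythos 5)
Your proof is correct and follows essentially the same route as the paper: expand $\langle \tb y,\,T_v^r\,\tb y\rangle_X$ using \eqref{def: general transport}, symmetrize over the indices $i,j$ (legitimate because $\tilde w_{ij}$ is symmetric) to obtain the sum of squares, and read off non-negativity and the kernel from the vanishing of $(\tilde y_i-\tilde y_j)^2\,\tilde w_{ij}$. The only cosmetic differences are that you state the positivity assumptions on $w_{ij}$, $r_{ij}$, $a_i$ explicitly and route the converse of the kernel characterization through self-adjointness of the non-negative operator, whereas the paper simply notes that direction is immediate from \eqref{def: general transport}.
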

\begin{proof}
	By a straightforward computation,
	$$
	\langle \tb y,\; T_v^r\,\tb y\rangle_X=c\sum_i y_i\,a_i\,(T_v^r\,\tb y)_i=c\sum_{i,j}\tilde{y}_i\,(\tilde{y_i}-\tilde{y}_j)\,\tilde{w}_{ij}=\frac{c}{2}\sum_{i,j}(\tilde{y}_i-\tilde{y}_j)^2\,\tilde{w}_{ij}\ge 0.
	$$
	When $T_v^r\tb y=0$, the above expression is $0$ and thus $\tilde{y}_i$ must the constant on any connected component. The converse is trivial by \eqref{def: general transport}.
\end{proof}

The above theorem ensures that $T_v^r$ is diagonalizable, with real-valued and negative eigenvalues. In applications, we will however look for the generalized eigenvectors of $T_v^r$: eigenvectors that are normalized by the degree on the graph, {\em i.e.} vectors $\tb u$ s.t.
\[
T_v^r\, \tb u=\lambda\,D\,\tb u,
\]
where $D$ is the degree matrix as before: $d_{ii}=\sum_{j} w_{ij}$. Equivalently we are looking for the eigenvectors $\tb y$ of $D^{-1/2}\,T_v^r\,D^{-1/2}$ with $\tb y=D^{1/2}\,\tb u$ or $\tb u=D^{-1/2}\,\tb y$ and the same generalized eigenvalues. From Theorem \ref{thm: nonnegative}, it is now straightforward to deduce that
\begin{cor} \label{spectralcor}
		Let $T_v^r$ be given by \eqref{def: general transport} and let $D$ be the degree matrix. Then the operator $D^{-1/2}\,T_v^r\,D^{-1/2}$ is self-adjoint in $\ell^2_X$ and non-negative, where $X=\text{diag}(ca_i)$ for some positive $c$. Furthermore, $D^{-1/2}\,T_\mu\,D^{-1/2}\,\tb u=0$ iff $(D^{-1/2}\,\tb u)_ia_i$ is constant on connected components of the graph.
\end{cor}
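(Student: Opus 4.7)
The plan is to reduce Corollary \ref{spectralcor} entirely to Theorem \ref{thm: nonnegative} via a change of variables, exploiting the crucial fact that both $D$ and $X$ are diagonal (hence commute with each other and with $D^{-1/2}$).

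First I would verify self-adjointness by direct computation. Recall that $T_v^r$ being self-adjoint in $\langle,\rangle_X$ means $(T_v^r)^t X = X\, T_v^r$. Using the identity $D^{-1/2} X = X D^{-1/2}$ (both are diagonal), we get
\[
(D^{-1/2} T_v^r D^{-1/2})^t\, X = D^{-1/2}(T_v^r)^t D^{-1/2} X = D^{-1/2}(T_v^r)^t X D^{-1/2} = D^{-1/2} X T_v^r D^{-1/2} = X D^{-1/2} T_v^r D^{-1/2},
\]
which is exactly the self-adjointness of $D^{-1/2} T_v^r D^{-1/2}$ in $\langle,\rangle_X$.

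Next, for non-negativity, I would perform the substitution $\tb y := D^{-1/2}\tb u$. Again exploiting that $X$ and $D^{-1/2}$ commute,
\[
\langle \tb u,\; D^{-1/2} T_v^r D^{-1/2}\, \tb u\rangle_X = \tb u^t X D^{-1/2} T_v^r D^{-1/2}\tb u = (D^{-1/2}\tb u)^t\, X\, T_v^r (D^{-1/2}\tb u) = \langle \tb y,\; T_v^r\, \tb y\rangle_X,
\]
which is non-negative by Theorem \ref{thm: nonnegative}; in fact Theorem \ref{thm: nonnegative} gives the explicit quadratic-form expression $\tfrac{c}{2}\sum_{i,j}(\tilde{y}_i-\tilde{y}_j)^2 \tilde{w}_{ij}$ with $\tilde{y}_i = a_i y_i$.

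Finally, for the kernel characterization, since $D$ has strictly positive diagonal entries, $D^{-1/2}$ is invertible, so $D^{-1/2} T_v^r D^{-1/2}\tb u = 0$ if and only if $T_v^r (D^{-1/2}\tb u) = 0$. Applying the kernel description in Theorem \ref{thm: nonnegative} to the vector $\tb y = D^{-1/2}\tb u$, this happens exactly when $a_i\, (D^{-1/2}\tb u)_i$ is constant on each connected component of $G$, which is the stated conclusion (modulo the typo $T_\mu$ for $T_v^r$). The only real care needed throughout is to keep track of commutation of the diagonal matrices and of the inner product $\langle,\rangle_X$ versus the standard one; there is no substantive obstacle beyond this bookkeeping.
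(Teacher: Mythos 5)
Your proposal is correct and follows essentially the same route as the paper: exploit that $D^{-1/2}$ and $X$ are both diagonal (hence commute) to reduce everything to Theorem \ref{thm: nonnegative} applied to $\tb y = D^{-1/2}\tb u$. Your self-adjointness check is just a more explicit matrix-identity version of the paper's remark that $D$ is self-adjoint in $\ell^2_X$ because both are diagonal, and your kernel argument via invertibility of $D^{-1/2}$ is an equally valid (slightly more direct) variant of the paper's appeal to the equality case of the quadratic form.
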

\begin{proof}
	$D$ is self-adjoint on $\ell^2_X$, simply because $D$ is diagonal and so is the metric provided by $\langle, \rangle_{X}$. It would be very different if we had to use non-diagonal metric (and we would have to study directly $D^{-1/2}\,T_v^r\,D^{-1/2}$ instead of $T_v^r$).
	
	The operator $D^{-1/2}\,T_v^r\,D^{-1/2}$ is still non-negative with
	\[
	\langle \tb u,\;D^{-1/2}\,T_v^r\,D^{-1/2}\,\tb u \rangle_{X}= \langle D^{-1/2}\,\tb u,\;T_v^r\,D^{-1/2}\,\tb u \rangle_{X}\geq 0,
	\]
	and by Theorem \ref{thm: nonnegative}, equality holds iff $(D^{-1/2}\,\tb u)_ia_i$ is constant on connected components of the graph.
\end{proof}

Compared with the Laplacian operator $(L\tb y)_i=\sum_j (y_i- y_j)w_{ij}$, we see that $T_v^r$ generalizes $L$ in the following ways:
\begin{itemize}
	\item $a_i$ modifies the measure/coordinate and thus makes the representation of $i$-th point closer to the origin if $a_i$ is large or further away from the origin if $a_i$ is small.
	\item $r_{ij}$ can enlarge or reduce the weight $w_{ij}$ between two nodes $i$ and $j$, serving as a weight modifier.
\end{itemize}
We can then use these two sets of parameters to guide data representation given by LE.

\subsection{Two examples}
We will use TE to denote the general transport operator \eqref{def: general transport}. Although the matrix $r$ can be used to fuse extra information, the implementation with $r$ could be more time-consuming as the size of $r$ is $n^2$. We will therefore first look at two examples (denoted by TA and TG respectively) using $a_i$ only. As Section \ref{sec:results} will show, TA and TG are often good enough to handle classification tasks when one class is known. The general TE, however, is needed when more than one classes are known.
\subsubsection{Transport by advection (TA)}
Advection is the active transportation of a distribution by a flow field. 
Let $\bm{\mu}=[\mu_1, \mu_2, \dots, \mu_n]^t$ be a vector that will be used to direct the clustering process. Let $\beta$ be a real parameter. Set $a_i=1+\beta\mu_i$, $r_{ij}=(a_j+a_i)/2$, and $\bar v_{ij}=(a_j-a_i)/2$. Clearly $\bar v_{ij}=\frac{a_j-a_i}{a_j+a_i}\,r_{ij}$. By Theorem \ref{coro about Tvr}, the operator $T_{\bm \mu}:=T_v^r$ with
\begin{equation} \label{operator T1}
	(T_{\bm\mu}\,\tb y)_i=\sum_j [(1+\beta\mu_i)\,y_i-(1+\beta\mu_j)\,y_j]\,w_{ij}
\end{equation}
is self-adjoint and enjoys other desired properties.

The operator $T_{\bm \mu}$ can also be derived directly from the general operator $T$ \eqref{eq: ori transport} by choosing the velocity field $v=\beta \nabla\,\tb y$, $\beta\in\mathbb{R}$. In this case, $v_{ij}=\beta\,(y_j- y_i)\,w_{ij}$ and $T$ becomes
\begin{equation} \label{non linear T}
(T\,\tb y)_i=\sum_j (y_i- y_j)\,w_{ij}-\frac{\beta}{2}\,\sum_j (y_j^2-y_i^2)\,w_{ij},    
\end{equation}
which is no longer linear. We can then linearize the second term in \eqref{non linear T} in the direction of $\bm \mu$ and $T$ will be exactly $T_{\bm \mu}$ (see \cite{njeunje2018computational} for details).

This choice of operator is inspired by the porous medium equation, for which we refer for example to \cite{MR2286292} for a thorough discussion of this type of non-linear diffusion on $\mathbb{R}^d$. In the present context, the idea behind having $v(\tb y)=\beta\,\nabla\tb y$ is to use the distribution $\tb y$ itself to help with clustering. The velocity field $v(\tb y)$ naturally points in the direction of the higher values of $\tb y$ if $\beta<0$ or towards lower values if $\beta>0$. Similarly solving the advection-diffusion equation
\[
d_t\,\tb y+T\,\tb y=0,
\]
would naturally lead to concentration around higher values of $\tb y$ if $\beta<0$ (limited by the dispersive effects of the graph Laplacian) or {\em a contrario} to faster dispersion if $\beta>0$. The ability to control concentrations and hence clustering is of obvious interest for our purpose. 

\subsubsection{Transport by gradient flows (TG)}
Set $r_{ij} \equiv 1$ in \eqref{def: general transport}. Then the general transport operator $T_v^r$ becomes
\begin{equation} \label{T2}
	(T_v\,\tb y)_i=\sum_j(a_iy_i-a_jy_j)\,w_{ij}\,\frac{2}{a_i+a_j}.
\end{equation}

Note that this is in fact the same operator appeared in Theorem \ref{main thm}, where $v$ is an scaling-invariant gradient of $\tb a=[a_1,\dots,a_n]^t$. Here $a_i$ plays a similar role as $1+\beta \mu_i$ in the first example of the transport by advection. One advantage of having the extra term $\frac{2}{a_i+a_j}$ is that even the weight modifier $r$ is constant, the weight $w_{ij}$ could still be changed. In applications, the default value for the measure modifier $a_i$ is $1$ and some of them may be greater than $1$ if extra information is known. When $a_i\neq a_j$, which often indicates that the two points $i$ and $j$ belong to different clusters, the factor $\frac{2}{a_i+a_j}<1$, weakening the original weight $w_{ij}$. Therefore, the formulation of the operator $T_v$ achieves measure modification and weight modification simultaneously without using $r$.

\section{The transport eigenmap method\label{sec:transport_algorithm}}
We describe the implementation of our new TE (short for transport eigenmap or transport extended) algorithm, including TA and TG as two important special cases.

\subsection{The algorithm\label{sec:algorithm}}

The steps are identical to those of LE and SE. We only need to modify the matrix used in the generalized eigenvalue problem. Given a set of $n$ points $X = \lbrace \mathbf{x}_{1},\mathbf{x}_{2},\ldots,\mathbf{x}_{n} \rbrace$ in $\mathbb{R}^{d}$, the goal is to find a map
\[
\Phi:\ \mathbb{R}^d\longrightarrow \mathbb{R}^m,
\]
so that the $n$ points $ Y =\lbrace \mathbf{y}_{1},\mathbf{y}_{2},\ldots,\mathbf{y}_{n} \rbrace$ in $\mathbb{R}^{m}$ given by $\mathbf{y}_{i}=\Phi(\mathbf{x}_i)$ represents $\mathbf{x}_{i}$ for all $i$ from $1$ to $n$. 

The goal is typically to have a lower dimensional representation $Y$ of the set of points $X$ with $m\ll d$ while still keeping the main features of the original set $X$. For example
if the points lie on a $m$-dimensional manifold where $m \ll d$, the hope would be to take as map $\Phi$ a good approximation of the projection on the manifold. 


\begin{itemize}
	\item \textbf{Step 1:} Construct the adjacency graph using the $k$-nearest neighbor (kNN) algorithm. This is done by putting an edge connecting nodes $i$ and $j$ given that $\mathbf{x}_{i}$ is among the $k$ nearest neighbors of $\mathbf{x}_{j}$ according to the Euclidean metric. We choose $k$ large enough so that {\em the graph that we obtain is fully connected}.
	
	\item \textbf{Step 2:} Define the weight matrix, $W$, on the graph. The weights $w_{ij}$ in $W$ are chosen using the heat kernel with some parameter $\sigma$. If nodes $i$ and $j$ are connected,
	\[ w_{ij} = \exp\left(-\frac{\| \mathbf{x}_{i} - \mathbf{x}_{j} \|_2^{2}}{2 \sigma^{2}}\right);  \]
	otherwise, $w_{ij} = 0$. 
	
      \item \textbf{Step 3:} Choose an appropriate transport operator and construct the corresponding matrix. Recall the general transport operator given in \eqref{def: general transport}
      $$
	(T\,\tb y)_i=\sum_j(a_iy_i-a_jy_j)\,w_{ij}\,\frac{2r_{ij}}{a_i+a_j}.      
      $$
    Here, the vector $\tb a=[a_1,\dots,a_n]^t$ and the matrix $(r_{ij})$ are the parameters to be chosen.
    Let $W^r$ denote the matrix with entries $w^r_{ij}=w_{ij}\,\frac{2r_{ij}}{a_i+a_j}$. Then the matrix form of $T$ is 
    \begin{equation} \label{matrix form of TE}
    	T=\text{diag}(a_i\,\sum_j w^r_{ij})-W^r\text{diag}(a_i).
    \end{equation}
    To get the matrix form of the special operator TA, we can either set $a_i=1+\beta\mu_i$ and $r_{ij}=(a_i+a_j)/2$ in \eqref{matrix form of TE}, or use the operator form \eqref{operator T1} to derive its matrix form directly
    \begin{equation*}
    	TA=L(I+\beta\text{diag}(\mu_i)),
    \end{equation*}
	where $L=D-W$ is the Laplacian matrix and $I$ is the identity.
	
	Similarly, for the operator $TG$, we can let $r_{ij}=1$ in \eqref{matrix form of TE} or use the expression in Theorem \ref{main thm} to get
	\begin{equation*}
		TG=L-(D_v+Wv),
	\end{equation*}
    where $D_v=\text{diag}(\sum_j w_{ij}v_{ij})$, $W_v=(w_{ij}v_{ij})$ and $v_{ij}=(a_j-a_i)/(a_j+a_i)$.
    
	\item \textbf{Step 4:} Find the $m$-dimensional transport mapping $\Phi_T$ by solving the generalized eigenvector problem,
	\begin{eqnarray} \label{eq: Tra Gen eig problem}
	T\, \tb u = \lambda\, D\,\tb u,
	\end{eqnarray}
	This can be done because of Corollary \ref{spectralcor}.
	Denote $\lbrace u^0,u^1,\ldots,u^{n-1} \rbrace$ be the solution set to (\ref{eq: Tra Gen eig problem}) written in ascending order according to their eigenvalues. Since there is hence no additional information in $u^0$, we define the mapping $\Phi_T$ by
	\[
        \mathbf{x}_{i} \longrightarrow \Phi_T(\mathbf{x}_i)=[u^1_i, u^2_i, \ldots, u^m_i].
        \]
\end{itemize}
%
\subsection{A toy example\label{sec:controlled_sample}}
We illustrate the behavior of LE, SE and TE (including TA and TG) with a toy example. The first picture in Figure~\ref{fig:eig_stu_3C_Schrod} is a dataset with $500$ points. The ground truth is that there are $5$ clusters, each containing $100$ points. 

LE is an unsupervised method that preserves local distance. We chose $k=50$ for KNN in Step 1 and $\sigma=1$ in Step 2 for simplicity.

SE, which uses the matrix $S=L+\alpha V$, requires extra parameters: $\alpha\ge 0$ and the diagonal potential matrix $V$. Assume the red points are known. Simply let $V_i=1$ if the $i$-th point is red and $V_i=0$ otherwise. Let $\alpha = \hat{\alpha} \cdot \tr(L) / \tr(V)$. This new parameter $\hat{\alpha}$ will allow us to balance the impact of the Laplacian matrix $L$ and the potential $V$ in the algorithm. We chose $\hat{\alpha}=10$. As expected, points with non-zero potential (the red ones in this example) are pushed towards the origin. As $L$ tries to preserve local distance, other points close to the red are dragged towards the origin as well.

For TA, we chose $\beta=10$ and $\bm \mu$ in the same way as $V$: $\mu_i=1$ for red and $\mu_i=0$ for other points. The red go to the origin because of rescaling of the coordinates, but the surrounding points don't ``see'' any changes in distance. This explains the less dragging effect in TA compared with SE.

In TG, we set $a_i=1$ by default and $a_i=10$ for the red points. The red are even better separated from others. This is because the factor $\frac{2}{a_i+a_j}$ in \eqref{T2} is less than $1$ and thus weakens the original weight $w_{ij}$ if $i$ and $j$ are not both red.

For the general TE, the matrix $r$ needs to be determined. The default is $r_{ij}=1$. Then it is natural to set 
\begin{eqnarray} \label{choice of r}
	r_{ij} = 
	\begin{cases} 
		small (<1), & \mbox{if $i$ and $j$ belong to different clusters}, \\ 
		big (>1), & \mbox{if $i$ and $j$ belong to the same cluster}\\
		1, & \mbox{if unknown} 
	\end{cases}	
\end{eqnarray}
We set $small = 0.5$ and $big = 100$, which help further gathering the red points.

If the pre-identified cluster is not near the center of the data points, e.g., the blue points, then we can set $a_i$ to be less than $1$ for the blue to push them away from the origin. The weight modifier $r$ in TE is always helpful to gather these points to their natural location. See Figure~\ref{fig:eig_stu_3C_Schrod} for the case $a_i=0.5$ for the blue and $1$ otherwise in TE ($r$ remains to be in \eqref{choice of r}). 

The general TE can even handle the case when more than one cluster are known. Let $a_i=10$ for red and $a_i=0.5$ for blue. $r$ is still given by \eqref{choice of r}. We can see in Figure~\ref{fig:eig_stu_3C_Schrod} that both red and blue are well-separated from others.

\begin{figure}[H]
	\centering
	\includegraphics[scale = .30]{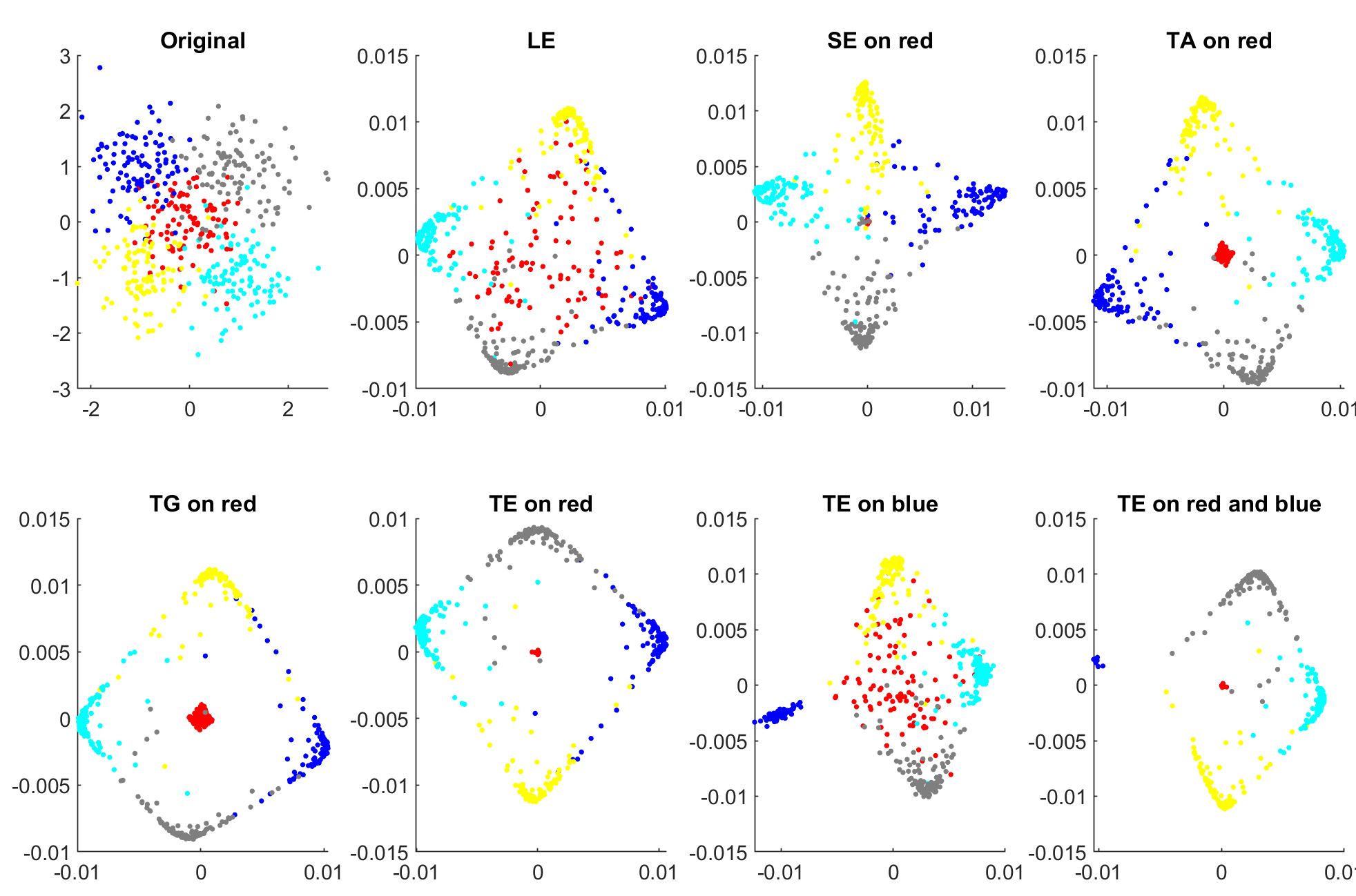}
	\caption{\footnotesize{The first plot presents the dataset, $500$  points grouped in $5$ clusters of $100$ points each. The next plots show the results of various mappings.}}
	\label{fig:eig_stu_3C_Schrod}
\end{figure}

It is very promising that TE can be used to help with clustering. We shall not pursuit this direction in this paper. Instead, we will test our methods on real hyperspectral data.


\section{Hyperspectral sample set experiments} \label{sec:hyperspectral_exp}

\subsection{The datasets}
We have taken advantage of two hyperspectral data sets: Indian Pines and Salinas. The Indian Pines dataset (cf. an example in Figure~\ref{fig:Indian_pines} in the supplementary document) was gathered by AVIRIS (Airborne Visible/Infrared Imaging Spectrometer) sensor over the Indian Pines test site in North-western Indiana. The Indian Pines dataset consists of $145\times145$ pixels images that contain $224$ spectral bands in the wavelength range $0.4~10^{-6}$ to $2.5~10^{-6}$ meters. The ground truth available is designated into sixteen classes (see Table~\ref{tab:pines} in the supplement). The number of bands has been reduced to $200$ by removing bands covering the region of water absorption. The Indian Pines dataset is available through Purdue's university MultiSpec site~\cite{hyperspectral,PURR1947}.

The Salinas dataset  was similarly gathered by AVIRIS sensor over Salinas Valley, California (see Figure~\ref{fig:Salinas} in the supplementary document). With again a similar structure, Salinas images are $512 \times 217$ pixels with $224$ spectral bands of approximately $3.7$ meter high spatial resolution. The ground truth available is also clustered into sixteen classes (see Table~\ref{tab:salinas} in the supplement). We again reduce the number of bands to $204$ by removing those bands covering the region of water absorption. The Salinas dataset is publicly available~\cite{hyperspectral}.

For easier testing purposes, we have also used a small sub-scene of the Salinas dataset, which we denote Salinas-B (shown in Figure~\ref{fig:SalinasB} in the supplement). Salinas-B consists of a $150 \times 100 \times 204$ data cube located within the same scene at  [samples, lines]=[200:349, 40:139] and includes only eight classes (see Table~\ref{tab:SalinasB} in the supplement). The Salinas-B dataset was used to allow for a faster and more thorough exploration of the parameters' space.

After the various mappings, we employ Matlab's 1-nearest neighbor algorithm to classify the data sets. We use $10\%$ of the data from each class to train the classifier and the remaining number of data points as the validation set. We took an average of ten runs to produce the confusion matrices, each using a disjoint set of data to train the classifier. 
\subsection{Choice of parameters\label{sec:parameter}}

Following the description of the mapping algorithms for the various methods under consideration in subsection \ref{sec:algorithm}, we made the following choices to construct the graph over which all methods rely
\begin{itemize}
\item The adjacency graph is built using $k=12$ nearest neighbors;
\item The weight matrix was obtained by using $\sigma = 1$;
  \item We calculated $m=50$ generalized eigenvectors for the Indian Pines dataset and $m=25$ for the Salinas-B dataset. The final mappings were obtained from those generalized eigenvectors as described in Step~4 of subsection \ref{sec:algorithm}.
  \end{itemize}
For SE, TA and TG, we also need to choose the potential $V$, the vector $\bm \mu$ and $\tb a$. In our testing, for example, we have assumed prior knowledge of either class 2-corn-notill or class 11-soybean-mintill in the Indian Pines dataset. This leads to the typical choice in the $11-soybean-mintill$ case  
\begin{eqnarray*}
	V_i,\;\mu_i = 
	\begin{cases} 
		1, & \mbox{if } x_i \in \mbox{Class  11--soybean-mintill}, \\ 
		0, & \mbox{elsewhere.} 
	\end{cases}	
\end{eqnarray*}
In TG, the default is $a_i=1$ and we will set $a_i=\beta$ for the known points. It remains to chose the parameters $\alpha$ and $\beta$. For SE, recall that in Section \ref{sec:controlled_sample} we introduced the parameter $\hat{\alpha}$ given by $\alpha = \hat{\alpha} \cdot \tr(\Delta) / \tr(V)$. To obtain the results listed in the next subsection, we used
\begin{itemize}
\item $\hat{\alpha} = 10^{4}$ for the Indian Pines data set and $\hat{\alpha} = 10^{2}$ for the Salinas-B data set for SE;
  \item $\beta = 20$ for both the Indian Pines and the Salinas-B data set for TA and TG.
  \end{itemize}
The particular choices of parameters summarized here were obtained after a more thorough investigation and optimization among possible values. This parameter exploration is shown in Section \ref{sec: Parameter search} in the supplementary material.

\subsection{Measuring accuracy}
We will compare the performance of several feature extraction methods in the next section. To obtain a more complete perspective, we consider several measurements of accuracy. 

We first use the adjusted Rand index (ARI), which is a widely used cluster validation index for measuring agreement between partitions~\cite{santos2009use}. Given a set $X$ of $n$ points and a partition, e.g., clusterings, of these points,  $P = \lbrace P_{1}, P_{2}, \ldots, P_{r} \rbrace$ into $r$ clusters, the ARI compares it to the ground truth partition $Q = \lbrace Q_{1}, Q_{2}, \ldots, Q_{s} \rbrace$ into $s$ clusters, by calculating
\begin{eqnarray} \label{eq:ARI}
ARI = \frac{\sum_{ij} \binom{n_{ij}}{2} - \left[ \sum_{i} \binom{a_{i}}{2} \sum_{j} \binom{b_{j}}{2} \right]/\binom{n}{2}}{\frac{1}{2} \left[ \sum_{i} \binom{a_{i}}{2} + \sum_{j} \binom{b_{j}}{2} \right] - \left[ \sum_{i} \binom{a_{i}}{2} \sum_{j} \binom{b_{j}}{2} \right] / \binom{n}{2} },
\end{eqnarray}
where $C_{ij} = |P_{i} \cap Q_{j}|$ is the confusion matrix, $a_{i} = |P_{i}|$, and $b_{j} = |Q_{j}|$, for $i = 1,\ldots,r$ and $j = 1,\ldots,s$.  

We next consider the overall accuracy (OA), and the average or weighted accuracy (AA). The overall accuracy is simply the total number of well classified objects w.r.t. the total number of objects, while the average accuracy is the average of the accuracy in each class
\[
OA=\frac{\sum_{i=1}^r |P_i\cap Q_i|}{\sum_{i=1}^r |P_i|}=\frac{\sum_i C_{i,i}}{n},\quad AA=\frac{1}{r}\,\sum_{i=1}^r \frac{|P_i\cap Q_i|}{|P_i|}=\frac{1}{r}\,\sum_{i=1}^r \frac{C_{i,i}}{|a_i|},
\]
with the notations above.

We finally employ the average F-score (FS) and Cohen's kappa coefficient ($\kappa$), which are given by
\[
FS=\frac{1}{r}\,\sum_i \frac{2*C_{i,i}}{2*C_{i,i}+\sum_{j\neq i} (C_{i,j}+C_{j,i})},\quad \kappa=\frac{n\,\sum_i C_{i,i}-\sum_{i,j} C_{i,j}\,C_{j,i}}{n^2-\sum_{i,j} C_{i,j}\,C_{j,i}}.
\]
Both the ARI and $\kappa$ coefficient measure the degree of agreement between clusters.
A strong cluster agreement with the ground truth usually also results in high overall accuracy. 
The average accuracy and the F-score are validation metrics that serve as test scores to ensure that our results are not bias towards a few particular classes. A comparable average accuracy and the F-score across methods is an indication that the algorithms do not favor a few particular classes over the others.
%
\section{Results}\label{sec:results}
We summarize the main results of our numerical experiments on the real hyperspectral images introduced in the previous section. More details are available in the supplementary document.
\subsection{Overall performance}
The following feature extraction algorithms are used in the experiment: principal components analysis~\cite{pearson1901lines} (PCA), Laplacian eigenmaps~\cite{belkin2003laplacian} (LE), diffusion maps~\cite{coifman2006diffusion} (DIF), isomap~\cite{tenenbaum2000global} (ISO), Schroedinger eigenmaps~\cite{cahill2014schroedinger} (SE), transport eigenmaps (TE, including TA and TG). The classification maps for each of the results can be found in the supplement.   

We especially focus on the Adjusted Rand Index, Overall Accuracy, and on the Cohen's kappa coefficient (emphasized in bold in the tables) as the main indicators for the performance of the algorithms.

\subsubsection{Testing on two examples}
We first test TA on the Salinas-B dataset (Table~\ref{tab:salinasB_results}), assuming the class ``lettuce'' is known in SE and TA. Unsurprisingly, the semi-supervised algorithms, SE and TA,  outperform the unsupervised algorithms, PCA, LE, DIF and ISO. The performance of the SE and TA is roughly similar, but with a small but consistent advantage to TA.

\begin{table}[H]
	\begin{center}\scalebox{0.9}{
		\begin{tabular}{ | c | c  c  c  c  c  c |} 
			\hline
			SB			 & PCA     & LE      & DIF     & ISO      & SE   & TA \\
			\hline
			ARI & 0.9429 & 0.9346 & 0.9164 & 0.9440 & 0.9439 & \textbf{0.9463}\\
			OA  & 0.9729 & 0.9685 & 0.9603 & 0.9733 & 0.9762 & \textbf{0.9780}\\
			AA  & 0.9690 & 0.9643 & 0.9564 & 0.9700 & 0.9777 & \textbf{0.9802}\\
			FS  & 0.9693 & 0.9638 & 0.9557 & 0.9696 & 0.9766 & \textbf{0.9795}\\
			$\kappa$ & 0.9682 & 0.9630 & 0.9534 & 0.9687 & 0.9720 & \textbf{0.9742}\\
			\hline
		\end{tabular}}
	\end{center}
	\caption {\footnotesize{Classification results for Salinas-B (SB): assume lettuce (class 14) is known}}
	\label{tab:salinasB_results} 
\end{table}

Classification algorithms frequently mis-classify samples of similar classes due to the similarities in their spectra information. For this reason, we tested the algorithms by grouping similar classes within the Indian Pines and Salinas-B data set to make new ground truths which we denote Indian Pines-G and Salinas-B-G (see Table~\ref{tab:pinesG} and Table~\ref{tab:SalinasBG} in the supplement). 

It turns out SE and TA indeed perform better on grouped Salinas-B (Table~\ref{tab:salinasBG_results}) than on Salinas-B. TA remains to be the best method for the grouped dataset.

\begin{table}[H]
	\begin{center}\scalebox{0.9}{
		\begin{tabular}{ | c | c  c  c  c  c  c |} 
			\hline
			SBG			 & PCA     & LE      & DIF     & ISO    & SE  & TA\\
			\hline
			ARI & 0.9460 & 0.9421 & 0.9154 & 0.9480 & 0.9711 & \textbf{0.9767}\\
			OA  & 0.9791 & 0.9767 & 0.9677 & 0.9795 & 0.9858 & \textbf{0.9880}\\
			AA  & 0.9769 & 0.9750 & 0.9669 & 0.9784 & 0.9819 & \textbf{0.9840}\\
			FS  & 0.9797 & 0.9763 & 0.9697 & 0.9797 & 0.9829 & \textbf{0.9850}\\
			$\kappa$ & 0.9725 & 0.9694 & 0.9576 & 0.9731 & 0.9814 & \textbf{0.9843}\\
			\hline
		\end{tabular}}
	\end{center}
	\caption {\footnotesize{Classification results for Salinas-B-G (SBG): assume lettuce (class 11) is known}}
	\label{tab:salinasBG_results} 
\end{table}

We then test TG on Indian Pines dataset and its grouped version, assuming the class ``soybean'' is known. In this difficult image, the gain of performance in using TG is significant. See Table~\ref{tab:pines_results} and Table~\ref{tab:pinesG_results} below.

\begin{table}[H]
	\begin{center} \scalebox{0.9}{
			\begin{tabular}{ | c | c  c  c  c  c  c | } 
				\hline
				IP		  	 & PCA     & LE      & DIF     & ISO    & SE 	& TG\\
				\hline
				ARI	& 0.4426 & 0.3745 & 0.4210 & 0.3930 & 0.6955 & \textbf{0.7104}\\
				OA  & 0.6761 & 0.6133 & 0.6557 & 0.6309 & 0.7354 & \textbf{0.7431}\\
				AA  & \textbf{0.6403} & 0.5782 & 0.6219 & 0.5979 & 0.6249 & 0.6248\\
				FS  & \textbf{0.6471} & 0.5784 & 0.6212 & 0.5996 & 0.6255 & 0.6250\\
				$\kappa$ & 0.6301 & 0.5592 & 0.6065 & 0.5785 & 0.6982 & \textbf{0.7071}\\
				\hline
		\end{tabular}}
	\end{center}
	\caption {\footnotesize{Classification results for Indian Pines (IP): assume soybean (class 11) is known}.}
	\label{tab:pines_results} 
\end{table}

\begin{table}[H]
	\begin{center}\scalebox{0.9}{
			\begin{tabular}{ | c | c  c  c  c  c  c | } 
				\hline
				IPG		     & PCA     & LE      & DIF     & ISO    & SE   & TG \\
				\hline
				ARI & 0.5330 & 0.4785 & 0.5102 & 0.4902 & 0.8929 & \textbf{0.9264}\\
				OA  & 0.7744 & 0.7307 & 0.7575 & 0.7418 & 0.9088 & \textbf{0.9155}\\
				AA  & 0.6987 & 0.6462 & 0.6883 & 0.6671 & \textbf{0.7111} & 0.7072\\
				FS  & 0.7111 & 0.6479 & 0.6905 & 0.6739 & \textbf{0.7157} & 0.7087\\
				$\kappa$ & 0.6996 & 0.6423 & 0.6770 & 0.6563 & 0.8788 & \textbf{0.8877}\\
				\hline
		\end{tabular}}
	\end{center}
	\caption {\footnotesize{Classification results for Indian Pines-G (IPG): assume soybean (class 10) is known}}
	\label{tab:pinesG_results} 
\end{table}

We remark that ideally the way to implement TE (e.g. TA or TG) should depend on physical interpretation of the data. The above tables show that TA and TG are good for ``arbitrary'' datasets.



\subsubsection{Testing the general TE}
Although being expensive in computation, the use of general TE is needed if information about more than one classes is known. Table~\ref{tab:SB_two_classes}  and Table~\ref{tab:pines_two_classes} show that SE, TA and TG can often perform worse when two classes are known. However, TE gives significant improvements. Here we use $r$ given by \eqref{choice of r} with $small = 0.9$ and $big = 10^4$, and set $a_i= 10$ and $a_i=20$ on the two known classes. 

\begin{table}[H]
	\begin{center} \scalebox{0.9}{
			\begin{tabular}{ | c | c  c c |c | c  c  c  | } 
				\hline
				IP		  	 & SE  & TG & TE & IPG & SE  & TG & TE\\
				\hline
				ARI	& 0.5272 & 0.7693 & \textbf{0.8169} &ARI & 0.4351 & 0.8547 & \textbf{0.9372}\\
				OA  & 0.6855 & 0.8091 & \textbf{0.8268}&OA  & 0.6858 & 0.8967  & \textbf{0.9252} \\
				AA  & 0.6221 & 0.6759 & \textbf{0.6864} &AA  & 0.6431 & 0.7055  & \textbf{0.7221}\\
				FS  & 0.6229 & 0.6766 & \textbf{0.6855} &FS  & 0.6467 & 0.7083& \textbf{0.7242}\\
				$\kappa$ & 0.6409 & 0.7818 & \textbf{0.8024} &$\kappa$ & 0.5821 & 0.8620 &\textbf{0.9004}\\
				\hline
		\end{tabular}}
	\end{center}
	\caption {\footnotesize{Classification results for Indian Pines (IP) and its grouped version (IPG): assume both corn and soybean are known}.}
	\label{tab:pines_two_classes} 
\end{table}

\begin{table}[H]
	\begin{center}\scalebox{0.9}{
			\begin{tabular}{ | c | c  c  c |c | c  c  c |} 
				\hline
				SB			 & SE   & TA  & TE &SBG			 & SE  & TA & TE\\
				\hline
				ARI & 0.9381 & 0.9805 & \textbf{0.9812} &ARI & 0.7916 & 0.9773 &\textbf{0.9823}\\
				OA  & 0.9702 & 0.9909 & \textbf{0.9914} &OA  & 0.9211 & 0.9902 & \textbf{0.9921}\\
				AA  & 0.9671 & 0.9903 & \textbf{0.9908} &AA  & 0.9877 & 0.9877 & \textbf{0.9900}\\
				FS  & 0.9666 & 0.9902 & \textbf{0.9909} &FS  & 0.9365 & 0.9889 & \textbf{0.9906}\\
				$\kappa$ & 0.9651 & 0.9894 & \textbf{0.9899} &$\kappa$ & 0.8966 & 0.9871 & \textbf{0.9896}\\
				\hline
		\end{tabular}}
	\end{center}
	\caption {\footnotesize{Classification results for Salinas-B (SB) and its grouped version (SBG): assume both corn and lettuce are known}}
	\label{tab:SB_two_classes}
\end{table}

In SE, points with positive potential will always be mapped towards the origin. There is no mechanism to handle two different clusters. This explains that SE often perform worse in the above tests. In TA and TG, although the distance from the points to the origin can be modified in different ways by varying $a_i$, points from different classes could still collide after mapping because of their initial locations. The general TE has the power of minimizing the possibility of mixing two known classes since the matrix $r$ provides internal force to group points in the same class.


\subsection{Dependence on the amount of the information}

We performed further experiments on Indian Pines-G and Salinas-G to see how the amount of information available from one particular class affects the performance measures for SE and transport methods TA and TG.



SE and transport methods have very close overall performance on the Indian Pines-G and Salinas-B-G datasets so the comparison may help to understand better the differences between them. As the amount of information increases, so do the performance measures. Figure~\ref{fig:selective_tra_vs_sch} shows the change in performance of SE, TA and TG from using $0 \%$ to using $100 \%$ of the ground truth with increments of $5 \%$ from a particular class.

\begin{figure}[H]
	\centering
	\includegraphics[scale = .18 ]{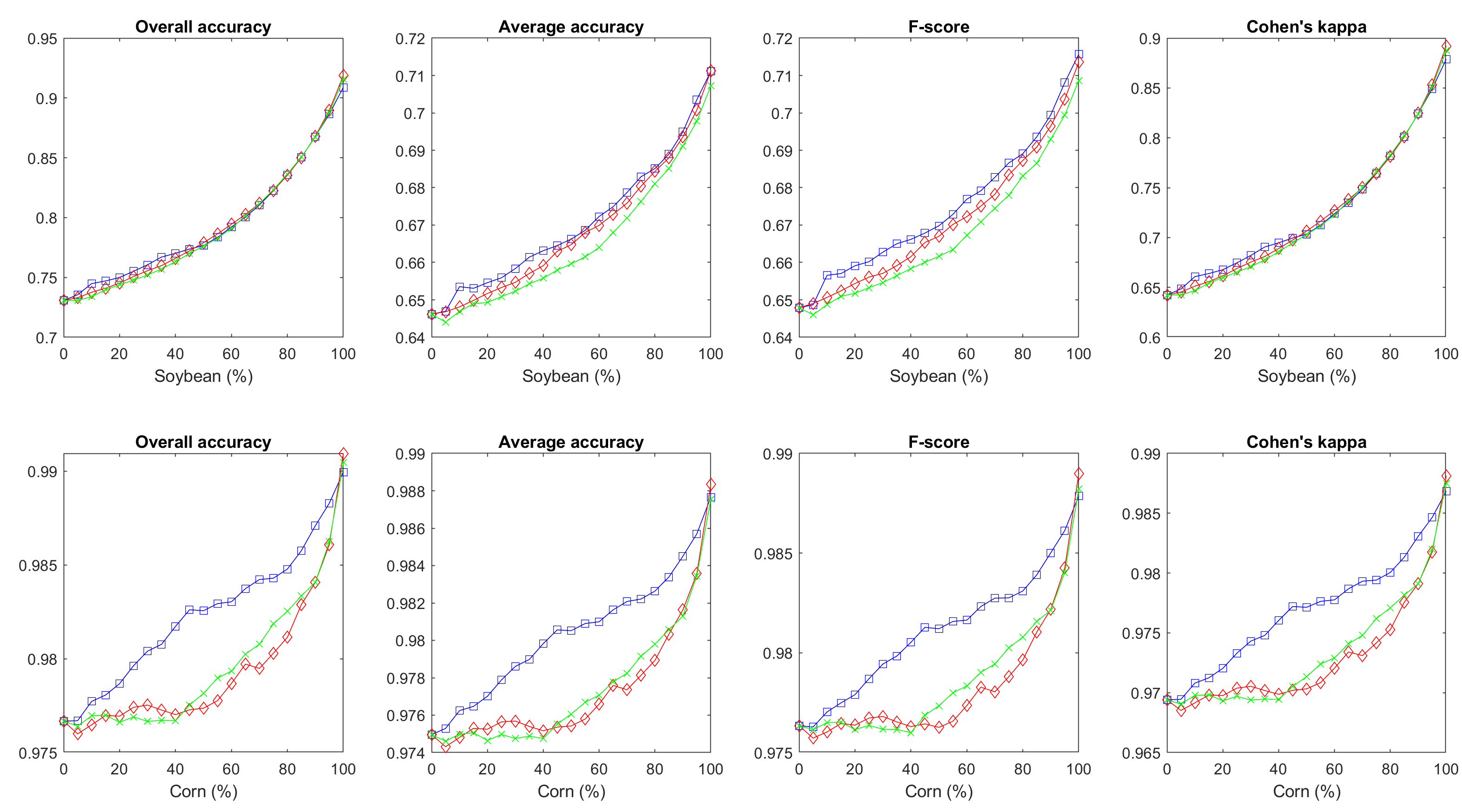}
	\caption{\footnotesize{Classification performance measures for SE (blue squares), TA (red diamonds) and TG (green x's) as a function of the amount of information provided, from $0 \%$ to $100 \%$ with increments of $5 \%$. The Indian Pines-G data set (top row) is used with the advection and potential placed on class 10--soybean. The Salinas-B-G (bottom row) is used with the advection and potential placed on class 10--corn-senesced-green-weeds.}}
	\label{fig:selective_tra_vs_sch}
\end{figure}
Over most of the figure, the SE actually performs slightly better than the TA and TG, with TA and TG only surpassing SE when we have close to $100\%$ of the information on the class. However {\em the difference between the two algorithms remains very small in those two simplified datasets}; this is especially striking on the Indian Pines-G.
%
%

In lieu of these results, it is worth mentioning that in real-life applications, the extra information provided to the algorithms of SE and transport methods does not come directly from the ground truth. Ideally, better and richer cluster information than the ground truth are produced using laboratory measurements and provided to the algorithms as the extra information. These laboratory measurements include various signals representing different materials in a wide range of conditions, e.g., lighting and weather. The use of the ground truth in our aforementioned results is simply due to the unavailability of those better and richer cluster information. It may very well be that with a more complete set of laboratory measurements, we would observe more significant differences between SE and transport methods. 

\subsection{Robustness of Transport eigenmaps}
In a last set of experiments, we investigate the robustness of transport methods TA and TG and some of our other feature extraction algorithms such as PCA, LE, and SE. For this experiment, we have added Gaussian noise to individual data points in the data set before it is processed by the feature extraction algorithms. The added Gaussian noise has a mean of $0$ and we selected $20$ logarithmically spaced values for the standard deviation varying from $10^{0}$ to $10^{5}$ which covers the range for values taken by the individual data points in both set of data. For SE and transport methods, the ground truths for class 10--soybean (Indian Pines-G) and class 11--lettuce (Salinas-B-G) are added to the algorithms. The results are shown on Figure~\ref{fig:performance_noisy_2}. 
\begin{figure}[H]
	\centering
	\includegraphics[scale = .18 ]{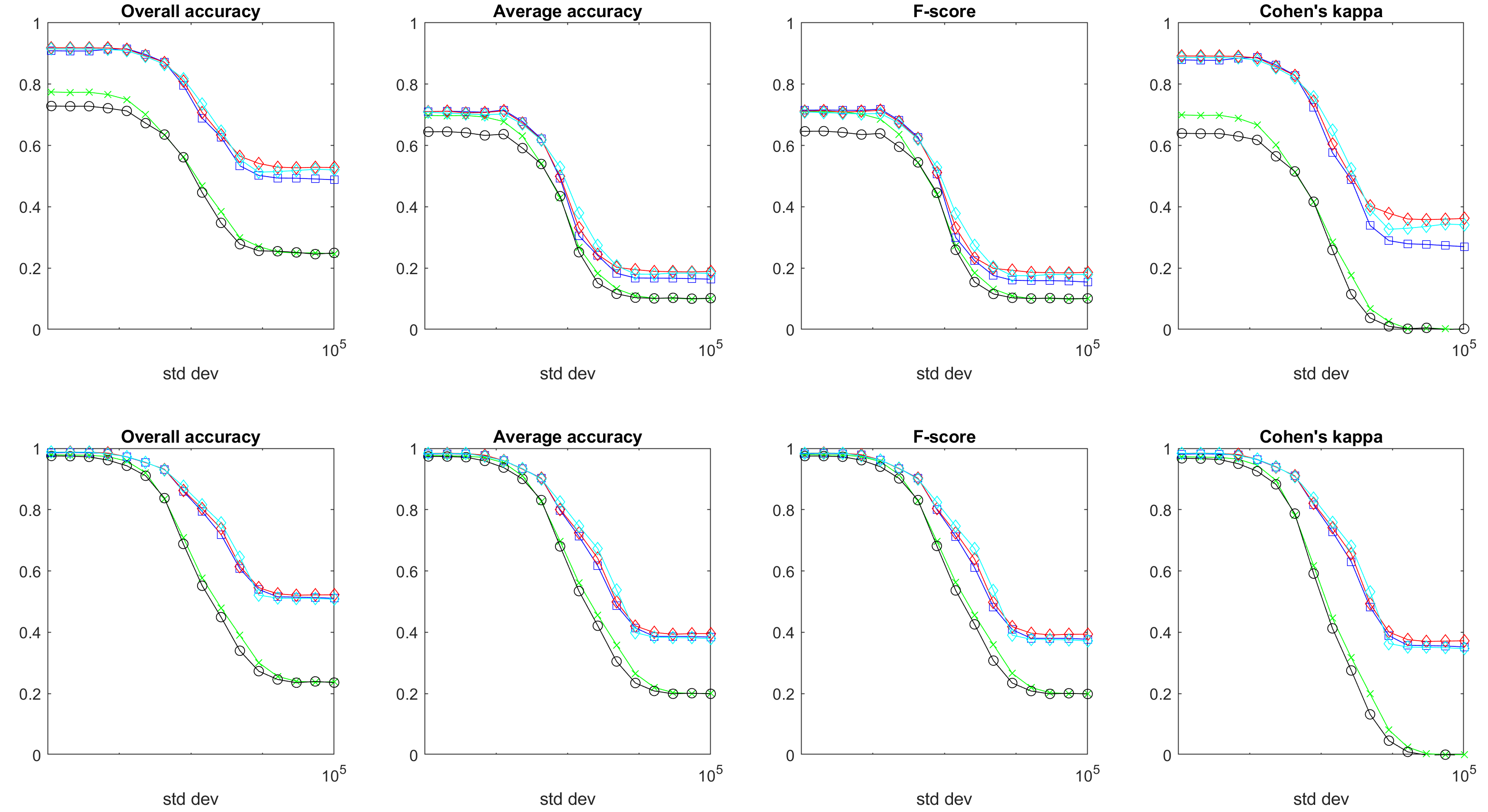}
	\caption{\footnotesize{Classification performance measures for TA (red diamonds), TG (cyan diamonds), SE (blue squares), PCA (green x's), and LE (black circles) as a function of noise. For Indian Pines-G (top row) the potential and advection are placed on class 10--soybean. For Salinas-B-G (bottom row) the potential and advection are placed on class 11--lettuce-romaine.}}
	\label{fig:performance_noisy_2}
\end{figure}
We first gather from the experiments that
SE and transport methods are more resilient to noise than PCA and LE. While the performance of all algorithms naturally decreases very significantly (and interestingly at almost the same mark), SE and transport methods resist better. On the Indian Pines-G, transport methods also end up being the best algorithm by a significant margin, performing $\sim 30\%$ better than SE for large noise whereas they are comparable for small noise. This again suggests that our new Transport algorithm is especially useful in difficult settings where previous methods do not perform well.
%
\section{Conclusion}
In this manuscript, we propose a novel approach to semi-supervised non-linear feature extraction extending the Laplacian eigenmaps. Similar in spirit to previous extension such as Schroedinger eigenmaps, our algorithm is derived from non-linear transport model. We provide a set of experiments based on artificially generated data sets and on publicly available hyperspectral data sets to compare the new method's performance to a variety of algorithms for reducing the dimension of the data provided to a standard classification algorithm.

Those experiments show intriguing possibilities for the new method, which has proved competitive with other algorithms in all settings and significantly outperforms other methods in the more difficult cases (low accuracy because of noise for example). We believe that this demonstrates a strong potential for new methods using advection/gradient flow operators, with in particular the following open questions
\begin{itemize}

\item How to further generalize the transport operator? The choice of the velocity field $v$ in Theorem \ref{main thm} makes the transport operator self-adjoint with respect to an inner product associated with a diagonal matrix $A$. It is natural to investigate the case with a non-diagonal, positive definite $A$.  

\item Can we better relate the choice of an algorithm to the expected structure of the problem? A good example might be time-dependent data, where a clear direction of propagation of the signal would lead to conjecture a even better performance of advection-based eigenmaps.
\item What is the best way to choose the parameters in the general transport method. The intuition provided in Section \ref{sec:controlled_sample} is good only for low dimensional data. When the dimension is high or there are two or more clusters, the choice of $r$ and $a_i$ can be very complicated. We plan to use neural network to attack this problem.
\end{itemize}

\bibliographystyle{siamplain}
\bibliography{references}

\newpage
\appendix
\section{Selecting the field $v$\label{sup:sec:discussion on v}}
\subsection{Limitations of $\bar v_{ij}=a_j-a_i$}
Among all anti-symmetric matrices, this choice is probably the simplest. In this case, the condition
\begin{equation} \label{sup: condition}
(1-\bar v_{ij})x_j=(1+\bar v_{ij})x_i
\end{equation}
 becomes
\begin{equation} \label{sup: condition 1}
(1+a_i-a_j)x_j=(1+a_j-a_i)x_i.
\end{equation}
There are $n$ unknown variables $x_1, \dots, x_n$ and at most $n+1$ independent equations. The solution space is therefore always one-dimensional provided it is non-trivial. For example, suppose every node is connected with the first node, then the value of $x_1$ determines the rest of the variables by \eqref{sup: condition 1}. 

If the nodes are connected as a chain, then the value of any one node in the chain determine the values of the rest.

The only situation one should to be careful is when there is a cycle of nodes. In this case, one needs to check the consistency of the system \eqref{sup: condition 1}. We summarize these observations in the following theorem.

\begin{theorem} \label{thm on cycle}
	The system \eqref{sup: condition 1} has a one-dimensional solution space if for any $l\ge 3$ and any cycle of length $l$, the following holds:
	Without loss of generality and for notation simplicity, assume that the cycle is formed by the first $l$ nodes $1, 2, \dots, l$. Then
	\begin{equation} \label{cycle condition}
	\begin{aligned}
	&(1+a_1-a_2)(1+a_2-a_3)\cdots (1+a_{l-1}-a_l)(1+a_l-a_1)\\
	&=(1+a_2-a_1)(1+a_3-a_2)\cdots(1+a_l-a_{l-1})(1+a_1-a_l)   
	\end{aligned}    
	\end{equation}
\end{theorem}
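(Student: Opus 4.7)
The plan is to turn the linear system \eqref{sup: condition 1} into a multiplicative propagation problem on the graph, and to recognize the cycle condition \eqref{cycle condition} precisely as the cocycle/monodromy condition for that propagation to be well defined. Assume throughout (as in the setup) that $1 + a_i - a_j \neq 0$ for every edge $(i,j)$, so that \eqref{sup: condition 1} is equivalent to
\begin{equation*}
\frac{x_j}{x_i} \;=\; \frac{1+a_j-a_i}{1+a_i-a_j} \;=:\; \rho_{ij}, \qquad \text{for each edge } (i,j).
\end{equation*}
Note that $\rho_{ji} = 1/\rho_{ij}$, so the data $\{\rho_{ij}\}$ is an edge $1$-cochain with values in $\mathbb{R}^\times$.

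First I would reduce to a connected graph; the claim on a disconnected graph follows component by component and the total solution space then has dimension equal to the number of components, so one can work on a single component. Fix a spanning tree $\mathcal{T}$ of that component and a root node $1$. Assign $x_1$ arbitrarily and propagate along $\mathcal{T}$ by
\begin{equation*}
x_j \;=\; x_1 \, \prod_{s=0}^{k-1} \rho_{i_s i_{s+1}}, \qquad (1 = i_0, i_1, \dots, i_k = j) \text{ the unique path in } \mathcal{T}.
\end{equation*}
This defines a unique $x$ on the tree, linear in $x_1$, so the tree part of the system has a one-dimensional solution space.

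The key step is to verify consistency on the non-tree edges. Each non-tree edge $(p,q)$ closes a unique fundamental cycle with $\mathcal{T}$, say $p = j_0, j_1, \dots, j_m = q, p$. Consistency at $(p,q)$ requires $x_q / x_p = \rho_{pq}$, and when the ratios built from the tree are substituted, this requirement is exactly
\begin{equation*}
\prod_{s=0}^{m-1} \rho_{j_s j_{s+1}} \cdot \rho_{qp} \;=\; 1,
\end{equation*}
which, after clearing denominators, is \eqref{cycle condition} for this cycle. So \emph{if} \eqref{cycle condition} holds for every cycle, in particular for every fundamental cycle of $\mathcal{T}$, the propagated $x$ satisfies every edge equation, yielding a one-dimensional solution space parameterized by $x_1$.

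The only subtle point I foresee is the usual one in spanning-tree/cocycle arguments: one must check that knowing \eqref{cycle condition} for \emph{all} simple cycles (as the theorem assumes) is at least as strong as knowing it for a set of fundamental cycles. This is immediate from the multiplicative structure --- any cycle decomposes as a symmetric difference of fundamental cycles, and both sides of \eqref{cycle condition} factorize accordingly, so the product form of the condition transfers. I would spell this out briefly to close the argument, and would also remark that if some $1 + a_i - a_j$ vanishes on an edge, then \eqref{sup: condition 1} forces $x_j = 0$ and by propagation potentially $x \equiv 0$; this degeneracy is the only way the solution space can fail to be one-dimensional.
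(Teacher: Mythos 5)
Your proposal is correct and takes essentially the same route as the paper, which only sketches the argument (propagate the value of $x$ from node to node, with cycles being the sole consistency obstruction, resolved exactly by the stated cycle condition); your spanning-tree/fundamental-cycle formulation simply makes that sketch precise, and the direction you worry about is trivial since fundamental cycles are themselves cycles covered by the hypothesis. One small slip in your closing remark: if $1+a_i-a_j=0$ on an edge then the equation forces $x_i=0$ (the other coefficient equals $2$), not $x_j=0$, which is precisely the degenerate situation the paper excludes by saying the solution space is one-dimensional ``provided it is non-trivial.''
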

The following corollaries provide more easy-to-check condition than \eqref{cycle condition}.
\begin{corollary}
	If the graph is a tree, i.e., without cycles, then the system \eqref{sup: condition 1} has solutions.
\end{corollary}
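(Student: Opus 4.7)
The plan is to obtain this corollary as an immediate consequence of Theorem \ref{thm on cycle}. By definition a tree is a connected acyclic graph, so in particular it contains no cycle of length $l \ge 3$. Therefore the hypothesis of Theorem \ref{thm on cycle} is satisfied vacuously, and the theorem yields a one-dimensional solution space for the system \eqref{sup: condition 1}; a fortiori the system admits (non-trivial) solutions.

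To make this concrete, and as a sanity check on the vacuous application of the hypothesis, I would complement the argument with an explicit construction of a solution. Pick any node of the tree as a root $r$ and assign $x_r$ an arbitrary nonzero value. For any other node $j$, there is a unique path $r = v_0, v_1, \ldots, v_k = j$ in the tree, along which we can propagate the assignment via
\[
x_{v_{l+1}} \;=\; \frac{1 + a_{v_{l+1}} - a_{v_l}}{1 + a_{v_l} - a_{v_{l+1}}}\, x_{v_l},
\]
obtained by solving \eqref{sup: condition 1} on the edge $(v_l, v_{l+1})$. Because the graph is a tree, the path to each $j$ is unique, so no compatibility condition between competing paths can arise; the resulting $(x_1, \ldots, x_n)$ is thus well-defined, satisfies \eqref{sup: condition 1} on every edge, and is parametrized by the single free choice $x_r$, confirming the one-dimensional solution space.

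The only mild technical point worth flagging is the genericity assumption that $1 + a_i - a_j \neq 0$ for every edge $(i,j)$ of the tree, which is needed for the above recursion to make sense. This is implicit in the framework (recall that the $a_i$ are positive and we look for positive diagonals $X$); in fact, if all $|a_i - a_j| < 1$ across edges, then both $(1 + a_i - a_j)$ and $(1 + a_j - a_i)$ are positive, so propagation from a positive $x_r$ yields a positive solution. I do not expect any real obstacle here: the corollary is essentially a restatement of Theorem \ref{thm on cycle} in the cycle-free case, and the direct construction serves only to make that reduction transparent.
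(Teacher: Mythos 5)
Your proof is correct and follows the paper's intended route: the corollary is obtained from Theorem \ref{thm on cycle} by noting that in a tree the cycle condition \eqref{cycle condition} holds vacuously, which is precisely why the paper states it without further argument. Your explicit root-to-leaf propagation is a sound supplementary check (and your caveat about needing $1+a_i-a_j\neq 0$ on edges is a fair point that the paper also leaves implicit), but it adds nothing essential beyond the vacuous application of the theorem.
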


\begin{corollary} \label{coro about 2}
	If the cardinality of the set $\{a_i: i=1,2,\dots,n\}$ is $2$, then the system \eqref{sup: condition 1} has solutions.
\end{corollary}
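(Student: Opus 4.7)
The plan is to apply Theorem \ref{thm on cycle} and check that, when $\{a_i:i=1,\ldots,n\}$ has cardinality at most $2$, the cycle condition \eqref{cycle condition} is automatically satisfied for every cycle in the graph. Once that verification is in place, the corollary is immediate.

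To carry out the verification, I would fix an arbitrary cycle of length $l\ge 3$, relabel its nodes as $1,2,\ldots,l$, and let $p,q$ denote the two possible values of the $a_i$. Each directed edge $i\to i{+}1$ of the cycle (indices taken mod $l$) contributes $(1+a_i-a_{i+1})$ to the LHS of \eqref{cycle condition} and $(1+a_{i+1}-a_i)$ to the RHS. I would partition the edges into three types: \emph{flat} edges with $a_i=a_{i+1}$, which contribute a factor of $1$ to both sides; \emph{ascending} edges with $a_i=p,\,a_{i+1}=q$, contributing $1+p-q$ to the LHS and $1+q-p$ to the RHS; and \emph{descending} edges with $a_i=q,\,a_{i+1}=p$, contributing the same pair of factors but swapped between the two sides. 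Writing $k_+$ and $k_-$ for the numbers of ascending and descending edges respectively, the LHS equals $(1+p-q)^{k_+}(1+q-p)^{k_-}$ and the RHS equals $(1+q-p)^{k_+}(1+p-q)^{k_-}$.

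The final step is the observation that $k_+=k_-$ on any cycle; this follows from the telescoping identity $\sum_{i=1}^{l}(a_{i+1}-a_i)=0$, where each ascending edge contributes $q-p$, each descending edge contributes $p-q$, and flat edges contribute $0$. Substituting $k_+=k_-$ makes both sides of \eqref{cycle condition} equal to $\bigl[(1+p-q)(1+q-p)\bigr]^{k_+}$, completing the verification. There is no serious obstacle here; the argument is essentially a one-line counting identity, and the only thing to keep straight is the asymmetry between the LHS and RHS products under reversal of edge orientation.
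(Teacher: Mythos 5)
Your argument is correct, and it takes a genuinely different route from the paper. The paper also reduces to verifying the cycle condition \eqref{cycle condition} via Theorem \ref{thm on cycle}, but it proceeds by induction on the cycle length: the base case is the three-node identity \eqref{l=3} (which holds because, with only two admissible values, two of $a_i,a_j,a_k$ must coincide), and the inductive step shortens an $(n+1)$-cycle to an $n$-cycle by multiplying and dividing by the factor $(1+a_n-a_1)$, then invoking the triangle identity again. Your proof replaces this induction with a direct counting argument: on any cycle the two sides of \eqref{cycle condition} are the products $(1+p-q)^{k_+}(1+q-p)^{k_-}$ and $(1+q-p)^{k_+}(1+p-q)^{k_-}$, and the telescoping identity $\sum_i (a_{i+1}-a_i)=0$ forces $k_+=k_-$ since $p\neq q$ (the degenerate cycle on which only one value appears has $k_+=k_-=0$ and is trivial). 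This is cleaner in two respects: it makes visible exactly why the two-value hypothesis is what is needed (the factors on the two sides form the same multiset once ascents and descents balance), and it avoids the paper's division by $(1+a_n-a_1)$, which implicitly assumes that this quantity is nonzero. What the paper's inductive formulation buys is the reuse of the $l=3$ identity as the single algebraic ingredient, mirroring the discussion that follows in the paper about why the conclusion fails for three distinct values; your parity argument delivers the same conclusion more economically.
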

\begin{proof}
	We will show by induction that \eqref{cycle condition} always holds when $a_i$'s attain only two values. When $l=3$, for any indices $i, j, k$, 
	\begin{equation} \label{l=3}
	(1+a_i-a_j)(1+a_j-a_k)(1+a_k-a_i)=(1+a_j-a_i)(1+a_k-a_j)(1+a_i-a_k)    
	\end{equation}
	holds whenever $a_i, a_j$ and $a_k$ attain two values. This proves the base case. 
	
	Assume \eqref{cycle condition} is valid for $l=n$. Let $l=n+1$. We need to show that
	\begin{equation}\label{l=n+1}
	\begin{aligned}
	&(1+a_1-a_2)(1+a_2-a_3)\cdots (1+a_{n-1}-a_n)(1+a_n-a_{n+1})(1+a_{n+1}-a_1)\\
	&=(1+a_2-a_1)(1+a_3-a_2)\cdots(1+a_n-a_{n-1})(1+a_{n+1}-a_n)(1+a_1-a_{n+1}).   
	\end{aligned}       
	\end{equation}
	Write LHS of the above equality as
	\begin{align*}
	&\text{LHS of }\eqref{l=n+1}\cdot \frac{1+a_n-a_1}{1+a_n-a_1}\\
	&=(1+a_1-a_2)(1+a_2-a_3)\cdots (1+a_{n-1}-a_n)(1+a_n-a_1)\frac{(1+a_n-a_{n+1})(1+a_{n+1}-a_1)}{1+a_n-a_1} \\
	&=(1+a_2-a_1)(1+a_3-a_2)\cdots(1+a_n-a_{n-1})(1+a_1-a_n)\frac{(1+a_n-a_{n+1})(1+a_{n+1}-a_1)}{1+a_n-a_1},
	\end{align*}
	where we applied the induction hypothesis for $l=n$ in the last step.
	
	It is clear that \eqref{l=n+1} will follow once we show 
	$$
	(1+a_1-a_n)\frac{(1+a_n-a_{n+1})(1+a_{n+1}-a_1)}{1+a_n-a_1}=(1+a_{n+1}-a_n)(1+a_1-a_{n+1}).
	$$
	This was verified in \eqref{l=3} as the base case $l=3$.
	
\end{proof}

The following example shows that Corollary \ref{coro about 2} is sharp in the sense that $2$ cannot be replaced with $3$ or more.

Suppose the first three nodes form a cycle. Then \eqref{cycle condition} becomes
\begin{equation} \label{eq: 3-cycle}
(1+a_1-a_2)(1+a_2-a_3)(1+a_3-a_1)=(1+a_2-a_1)(1+a_3-a_2)(1+a_1-a_3),
\end{equation}
which can be simplified to
\begin{equation} \label{eq: 3-cycle simple}
(a_1-a_2)(a_2-a_3)(a_3-a_1)=0.
\end{equation}
A quick way to see the equivalence of the above two equations is to note that the difference of the two sides of \eqref{eq: 3-cycle} is a polynomial that vanishes at $a_1=a_2$, $a_1=a_3$ and $a_2=a_3$, and thus has factors $a_1-a_2$, $a_1-a_3$ and $a_2-a_3$. Now it is clear that equation \eqref{eq: 3-cycle simple} cannot hold if $a_1, a_2, a_3$ are all different.

\subsection{The choice of $\bar v_{ij}=\frac{a_j-a_i}{a_j+a_i} r_{ij}$}
We argue that $\bar v_{ij}=\frac{a_j-a_i}{a_j+a_i} r_{ij}$ is a very natural condition for the self-adjoint transport operator. 
\subsubsection{Relation to the minimization problem.} Just like the Laplacian operator is closely related with the problem of minimizing $\sum_{i,j}(y_i-y_j)^2w_{ij}$, transport operator, whose major term is the Laplacian, corresponds to minimizing a more general quadratic form.

Let $Q$ be a symmetric matrix. Write 
\begin{equation} \label{general quadratic form}
E(\tb y):=\tb y^t Q\tb y=\sum_{ij}(\al_iy_i^2+\al_jy_j^2-2\be_{ij}y_iy_j)w_{ij},    
\end{equation}
where $\be$ is symmetric. When $Q$ is the Laplacian, $\al_i\equiv 1$ and $\beta_{ij}\equiv 1$. 

We want to minimize $E(\tb y)$ under a general constraint $\tb y^tX\tb y=1$, where $X$ is a positive definite matrix. By Lagrangian multiplier method, this problem reduces to the general eigenvalue problem $Q\tb y=\lambda X\tb y$. One can expect that the eigenvalues of $X^{-1}Q$ are real, and to relate it with $T_v^r$, simply let $X^{-1}Q=T_v^r$, i.e., $Q=XT_v^r$. It is straightfoward to see that
$$
Q=2\begin{bmatrix}
d_1\al_1 & &\\
& \ddots &\\
& & d_n\al_n
\end{bmatrix}
-2(w_{ij}\be_{ij}),
$$
where $d_i=\sum_j w_{ij}$, and
$$
T_v^r=\begin{bmatrix}
D_1 & &\\
& \ddots &\\
& & D_n
\end{bmatrix}-(w_{ij}(r_{ij}+\bar v_{ij})),
$$
where $D_i:=\sum_j w_{ij}(r_{ij}-\bar v_{ij})$. So $Q=XT_v^r$ is equivalent to
\begin{equation}
2\begin{bmatrix}
d_1\al_1 & &\\
& \ddots &\\
& & d_n\al_n
\end{bmatrix}-2(w_{ij}\be_{ij})=X\begin{bmatrix}
D_1 & &\\
& \ddots &\\
& & D_n
\end{bmatrix}-X(w_{ij}(r_{ij}+\bar v_{ij})).
\end{equation}
One may try to find a special solution by letting
\begin{equation} \label{eq 1}
2\begin{bmatrix}
d_1\al_1 & &\\
& \ddots &\\
& & d_n\al_n
\end{bmatrix}=X\begin{bmatrix}
D_1 & &\\
& \ddots &\\
& & D_n
\end{bmatrix}  
\end{equation}
and 
\begin{equation} \label{eq 2}
2(w_{ij}\be_{ij})=X(w_{ij}(r_{ij}+\bar v_{ij}))    
\end{equation}

From \eqref{eq 1} one can deduce that $X$ must be diagonal. Write $X=\text{diag}(x_i)$. Then
\begin{equation} \label{eq 3}
2d_i\al_i=x_i\sum_jw_{ij}(r_{ij}-\bar v_{ij}).
\end{equation}
Equation \eqref{eq 2} implies $2w_{ij}\be_{ij}=x_iw_{ij}(r_{ij}-\bar v_{ij})$ and thus
\begin{equation} \label{r+v}
r_{ij}+\bar v_{ij}=\frac{2\be_{ij}}{x_i}.
\end{equation}
Since $\bar v_{ij}=-\bar v_{ij}$ and $r_{ij}=r_{ji}$, the above equation yields
$$\bar v_{ij}=\frac{2\beta_{ij}}{x_i}-r_{ij}=-\bar v_{ij}=-\frac{\beta_{ij}}{x_j}+r_{ij},$$
which is the same as
\begin{equation} \label{r-v}
r_{ij}-\bar v_{ij}=\frac{\beta_{ij}}{x_j}.
\end{equation}
Combine \eqref{r+v} and \eqref{r-v}, and we get
\begin{align}
&r_{ij}=\be_{ij}\left(\frac{1}{x_i}+\frac{1}{x_j} \right)\\
&\bar v_{ij}=\be_{ij}\left(\frac{1}{x_i}-\frac{1}{x_j} \right).
\end{align}
Clearly, 
\begin{equation*}
\frac{\bar v_{ij}}{r_{ij}}=\frac{x_j-x_i}{x_j+x_i},
\end{equation*}
which is of the right form.

Finally, invoking \eqref{eq 3},
\begin{equation*}
\al_i=\frac{x_i}{2d_i}\sum_jw_{ij}(r_{ij}-\bar v_{ij})=\frac{x_i}{d_i}\sum_j \frac{w_{ij}\be_{ij}}{x_j}.
\end{equation*}
In conclusion, for given $x_i$ and $\be_{ij}$, one can find the right $\al_i$ and $r_{ij}, \bar v_{ij}$ so that the minimization problem of the general quadratic form \eqref{general quadratic form} corresponds to a general form of transport operator with the property $\bar v_{ij}=\frac{a_j-a_i}{a_j+a_i} r_{ij}$. 

\subsubsection{Continuous analogues} 
The classic transport operator in Euclidean space is
\begin{equation}
F(y)=\Delta y-\text{div}(\tb vy).
\end{equation}
As we used $-\Delta$ instead of $\Delta$ as the (discrete) graph Laplacian, we should modify the above definition and regard the following operator as the continuous analogue of $T$:
\begin{equation*}
F(y)=\Delta y+\text{div}(\tb v y),
\end{equation*}
where $y$ is a real-valued function and $\tb v$ is vector-valued. 

We aim to find $\tb v$ such that $F(y)$ is self-adjoint with respect to some inner product $\langle f,g\rangle_x:=\int f(t)g(t)x(t)\, dt$ associated with the function $x$. More precisely, we need to show that there exists a suitable function $x$  such that for any real-valued test functions $y$ and $z$,
\begin{equation}
\langle F(y),z\rangle_x=\langle y, F(z)\rangle_x.
\end{equation}
By partial summation and straightforward calculations, 
\begin{align*}
\langle F(y),z\rangle&=\int F(y)(t) z(t) x(t)\,dt=\int (\Delta y+\text{div}(\tb vy))zx\\
&=\int y\Delta(zx)-\int \tb v y\cdot\nabla(zx)=\int y[\Delta(zx)-\tb v\cdot \nabla(zx)]\\
&=\int y(x\Delta z+z\Delta x+2\nabla z\cdot \nabla x-x\tb v\cdot \nabla z-z\tb v\cdot \nabla x)\\
=\int y(x\Delta z+x\nabla z\cdot &\tb v+xz\text{div} \tb v)+\int y(z\Delta x+2\nabla z\cdot \nabla x-2x\tb v\cdot\nabla z-xz\text{div}\tb v-z\tb v\cdot\nabla x)\\
=\int y(\Delta z+\text{div}& (\tb vz))x+\int y(z\Delta x+2\nabla z\cdot \nabla x-2x\tb v\cdot\nabla z-xz\text{div}\tb v-z\tb v\cdot\nabla x)\\
=\langle y, F(z)\rangle&+\int y(z\Delta x+2\nabla z\cdot \nabla x-2x\tb v\cdot\nabla z-xz\text{div}\tb v-z\tb v\cdot\nabla x).
\end{align*}
Therefore, we need to set 
$$
\int y(z\Delta x+2\nabla z\cdot \nabla x-2x\tb v\cdot\nabla z-xz\text{div}\tb v-z\tb v\cdot\nabla x)=0.
$$
As $y$ is arbitrary, we have
$$
\begin{cases}
z(\Delta x-x\text{div}\tb v-\tb v\cdot \nabla x)=0\\
2\nabla z\cdot (\nabla x-x\tb v)=0.
\end{cases}
$$
Since $z$ is arbitrary as well, 
$$
\begin{cases}
\Delta x-x\text{div}\tb v-\tb v\cdot \nabla x=0\\
\nabla x-x\tb v=0.  
\end{cases}
$$
It is easy to see that the first equation is the divergence of the second one. Thus the only equation we need is
\begin{equation} \label{ctns eq}
\nabla x=x\tb v
\end{equation}
We claim that this equation agrees with that in the discrete setting. For the discrete operator $T$, if we set $\bar v_{ij}=a_j-a_i$, which means that $v_{ij}=2\bar v_{ij}w_{ij}=2(a_j-a_i)w_{ij}=2(\nabla a)_{ij}$ using the  rules of translation, then this corresponds to take $\tb v=\nabla 2a$ for some real-valued function $a$. As
$$
\nabla x``="(x_j-x_i)w_{ij},
$$
and
$$
x\nabla 2a``="\frac{x_j+x_i}{2}2(a_j-a_i)w_{ij}=(x_j+x_i)(a_j-a_i)w_{ij}
$$
the equation \eqref{ctns eq} in Euclidean space becomes
$$
(x_j-x_i)w_{ij}=(x_j+x_i)(a_j-a_i)w_{ij},
$$
which is exactly \eqref{sup: condition 1}, the equation in the discrete setting.

Note that in the continuous setting, \eqref{ctns eq} can be solved easily
\begin{align*}
&\nabla x=x\nabla 2a\\
&\nabla \log x=\nabla 2a\\
&x=ce^{2a}.
\end{align*}
The above process is invalid in the discrete setting and thus we have to impose condition \eqref{cycle condition} in Theorem \ref{thm on cycle}. However, setting $\bar v_{ij}=\frac{a_j-a_i}{a_j+a_i}$ gives $$v_{ij}=2\bar v_{ij}w_{ij}=\frac{a_j-a_i}{\frac{a_j+a_i}{2}}w_{ij}=\left(\frac{\nabla a}{a} \right)_{ij},$$
which suggest taking $\tb v=\frac{\nabla a}{a}$ in \eqref{ctns eq}. In this situation, we have
$$
\nabla x=x\frac{\nabla a}{a},
$$
which always has solutions $x=ca$ in both continuous and discrete settings. We conclude that $\bar v_{ij}=\frac{a_j-a_i}{a_j+a_i}$ is a natural flow field that makes $T$ self-adjoint.

\section{Hyperspectral dataset\label{sup:sec:hyperspectral data}}
We present here in more details the structure of our dataset, with first the ground truth and an example of spectral band for the Indian Pines data set.

\begin{figure}[H]
	\centering
	\includegraphics[scale = .5 ]{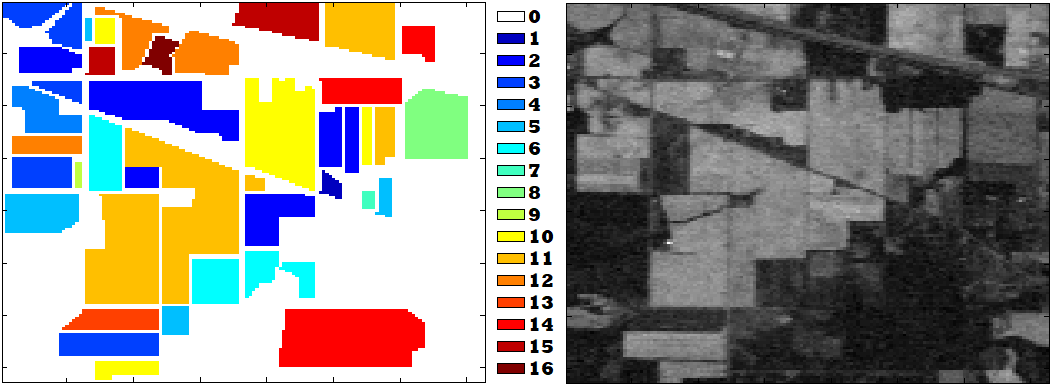}
	\caption{\footnotesize{Ground truth of Indian Pines data set (left) and sample band of Indian Pines data set (right.)}}
	\label{fig:Indian_pines}
\end{figure}

The ground truth classes for the Indian Pines data set are listed below, together with the number of samples in each class.

\begin{table}[H]
	\begin{center}
		\begin{tabular}{ | l | l | l | } 
			\hline
			\# & Class & Sample \\
			\hline \hline
			0 &	Empty-space                   &  10776\\ \hline
			1 &	Alfalfa                       &  46   \\ \hline
			2 &	Corn-notill                   &  1428 \\ \hline
			3 & Corn-mintill                  &  830  \\ \hline
			4 & Corn                          &  237  \\ \hline
			5 & Grass-pasture                 &  483  \\ \hline
			6 & Grass-trees                   &  730  \\ \hline
			7 & Grass-pasture-mowed           &  28   \\ \hline
			8 & Hay-windrowed                 &  478  \\ \hline
			9 & Oats                          &  20   \\ \hline
			10&	Soybean-notill                &  972  \\ \hline
			11&	Soybean-mintill               &  2455 \\ \hline
			12&	Soybean-clean                 &  593  \\ \hline
			13&	Wheat                         &  205  \\ \hline
			14&	Woods                         &  1265 \\ \hline
			15&	Buildings-Grass-Trees-Drives  &  386  \\ \hline
			16&	Stone-Steel-Towers            &  93   \\ \hline
		\end{tabular}
	\end{center}
	\caption {\footnotesize{Indian Pines classes.}}
	\label{tab:pines} 
\end{table}

The Salinas dataset is very similar to the Indian Pines with
\begin{figure}[H]
	\centering
	\includegraphics[scale = .5 ]{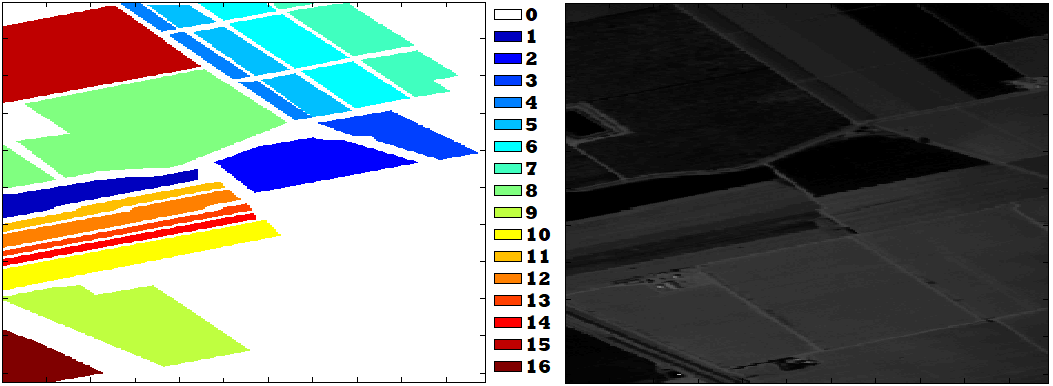}
	\caption{\footnotesize{Ground truth of Salinas data set (left) and sample band of Salinas data set (right.)}}
	\label{fig:Salinas}
\end{figure}
And the list of classes comparable as well
\begin{table}[H]
	\begin{center}
		\begin{tabular}{ | l | l | l | } 
			\hline
			\# & Class & Sample \\
			\hline \hline
			0 &	Empty-space                   & 10776\\ \hline
			1 &	Broccoli-green-weeds-1		  & 2009\\ \hline
			2 &	Broccoli-green-weeds-2        &	3726\\ \hline
			3 &	Fallow	                      & 1976\\ \hline
			4 &	Fallow-rough-plow			  &	1394\\ \hline
			5 &	Fallow-smooth				  &	2678\\ \hline
			6 &	Stubble						  &	3959\\ \hline
			7 &	Celery						  &	3579\\ \hline
			8 &	Grapes-untrained			  &	11271\\ \hline
			9 &	Soil-vineyard-develop		  &	6203\\ \hline
			10&	Corn-senesced-green-weeds	  &	3278\\ \hline
			11&	Lettuce-romaine-4wk			  &	1068\\ \hline
			12&	Lettuce-romaine-5wk			  &	1927\\ \hline
			13&	Lettuce-romaine-6wk			  &	916\\ \hline
			14&	Lettuce-romaine-7wk			  &	1070\\ \hline
			15&	Vineyard-untrained			  &	7268\\ \hline
			16&	Vineyard-vertical-trellis	  &	1807\\ \hline
		\end{tabular}
	\end{center}
	\caption {\footnotesize{Salinas classes.}}
	\label{tab:salinas} 
\end{table}
%
%
%
%

We introduce the reduced dataset Salinas-B, which allows for faster calculations than the full Salinas dataset and proved useful in exploring the space of parameters for optimization. 

\begin{figure}[H]
	\centering
	\includegraphics[scale = .5 ]{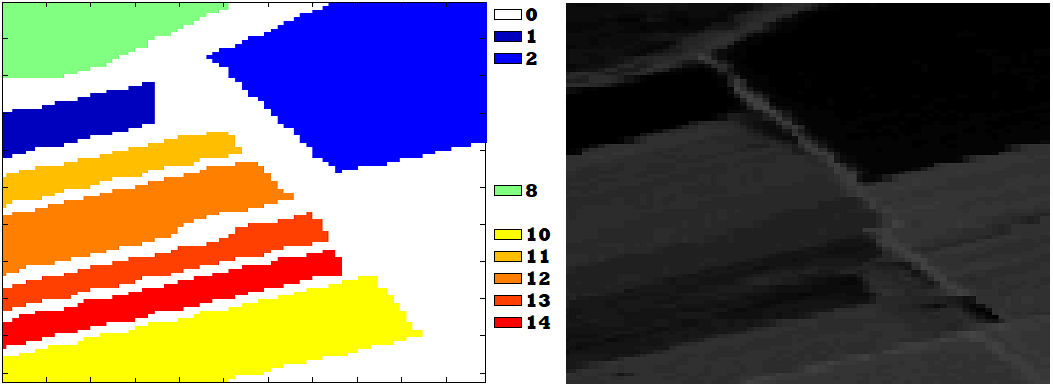}
	\caption{\footnotesize{Ground truth of Salinas-B data set (left) and sample band of Salinas-B data set (right.)}}
	\label{fig:SalinasB}
\end{figure}

\begin{table}[H]
	\begin{center}
		\begin{tabular}{ | l | l | l | } 
			\hline
			\# & Class & Sample \\
			\hline \hline
			0 &	Empty-space                   & 5826\\ \hline
			1 &	Broccoli-green-weeds-1		  & 914\\ \hline
			2 &	Broccoli-green-weeds-2 		  &	1854\\ \hline
			8 &	Grapes-untrained			  &	1240\\ \hline
			10&	Corn-senesced-green-weeds	  &	1959\\ \hline
			11&	Lettuce-romaine-4wk			  &	655\\ \hline
			12&	Lettuce-romaine-5wk			  &	1229\\ \hline
			13&	Lettuce-romaine-6wk			  &	616\\ \hline
			14&	Lettuce-romaine-7wk			  &	707\\ \hline
		\end{tabular}
	\end{center}
	\caption {\footnotesize{Salinas-B classes.}}
	\label{tab:SalinasB} 
\end{table}

Finally we also use variant classifications on the Indian Pines and Salinas datasets, denoted Indian Pines-G and Salinas-B-G, which group together similar classes.

\begin{table}[H]
	\begin{center}
		\begin{tabular}{ | l | l | l | } 
			\hline
			\# & Class & Sample \\
			\hline \hline
			0 &	Empty-space                   &  10776\\ \hline
			1 &	Alfalfa                       &  46   \\ \hline
			2 &	Corn	                      &  2495 \\ \hline
			5 & Grass		                  &  1241 \\ \hline
			8 & Hay-windrowed                 &  478  \\ \hline
			9 & Oats                          &  20   \\ \hline
			10&	Soybean		                  &  4020 \\ \hline
			13&	Wheat                         &  205  \\ \hline
			14&	Woods                         &  1265 \\ \hline
			15&	Buildings-Grass-Trees-Drives  &  386  \\ \hline
			16&	Stone-Steel-Towers            &  93   \\ \hline
		\end{tabular}
	\end{center}
	\caption {\footnotesize{Indian Pines-G classes, ground truth with corresponding grouped labels.}}
	\label{tab:pinesG} 
\end{table}

\begin{table}[H]
	\begin{center}
		\begin{tabular}{ | l | l | l | } 
			\hline
			\# & Class & Sample \\
			\hline \hline
			0 &	Empty-space                   & 5826\\ \hline
			1 &	Broccoli-green-weeds-1		  & 914\\ \hline
			2 &	Broccoli-green-weeds-2 		  &	1854\\ \hline
			8 &	Grapes-untrained			  &	1240\\ \hline
			10&	Corn-senesced-green-weeds	  &	1959\\ \hline
			11&	Lettuce-romaine			  	  &	3207\\ \hline
		\end{tabular}
	\end{center}
	\caption {\footnotesize{Salinas-B-G classes, ground truth with corresponding grouped labels.}}
	\label{tab:SalinasBG} 
\end{table}

\section{Parameter exploration and search} ~\label{sec: Parameter search}
We present more in details in this section how we chose the particular values of the parameters for our algorithms.
The choice came from separate optimizations, varying one parameter while keeping others fixed. We refer to the main article for the description of where the parameters enter in the various algorithms.

\subsection{Selecting $k$, $\sigma$ and $m$}
The first series of tests were done to determine $k$, the number of nearest neighbors used to construct the graph, $\sigma$ the deviation used to construct the weight matrix and $m$ the number of generalized eigenvectors (or intrinsic dimension) used in the mapping. 

We investigate $\sigma$ in the last row of Figure~\ref{fig:parameter_k_and_no_dim} by looking at various measures of performance of the Laplacian eigenmap for $k=12$ and $m=50$ (Indian Pines) or $m=25$ (Salinas-B). Similarly we look at $k$ in the middle row of Figure~\ref{fig:parameter_k_and_no_dim} by choosing $\sigma=1$ and the same values of $m$. The first row of Figure~\ref{fig:parameter_k_and_no_dim} analyzes the optimal choice of $m$ by fixing $k=12$,$\sigma=1$.
\begin{figure}[H]
	\centering
	\includegraphics[scale = .7 ]{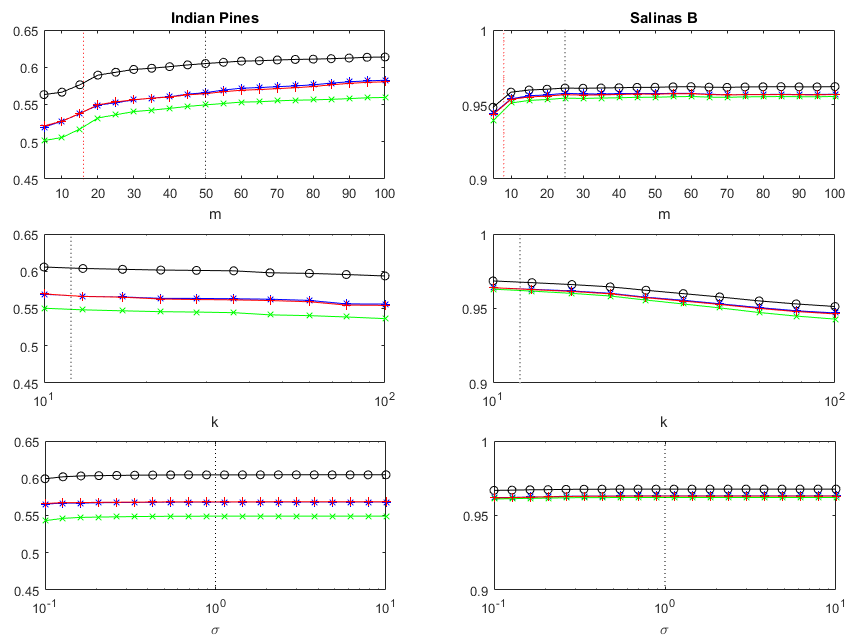}
	\caption{\footnotesize{Optimal analysis of the parameters $m$ (top), $k$ (middle), and $\sigma$ (bottom): overall accuracy (black circles), average accuracy (blue stars), F-score (red pluses), and Cohen's kappa coefficient (green x's). The red dashed vertical lines represent the number of classes and the black dashed vertical lines represent our choice for the corresponding parameter.}}
	\label{fig:parameter_k_and_no_dim}
\end{figure}
The results from the optimizations show that the choices of $k$ and $\sigma$ do not greatly affect the results. We do observe that, since higher values for $k$ introduce more connections between distinct classes, this leads to a higher number of mis-classified samples and explains the slight decline in performance observed in the middle row of Figure~\ref{fig:parameter_k_and_no_dim}. Given those results,  we fixed the value for the weight parameter at $\sigma = 1$ for simplicity, while the value for the number of neighbors is fixed at $k = 12$ to ensure that we have connected graphs but that $k$ is not too large.

The Indian Pines data set had already been investigated and the value $m=50$ for the intrinsic dimension reflects both what was seen previously in literature and our results in Figure~\ref{fig:parameter_k_and_no_dim} (top row). Since the Salinas-B data set has never been analyzed before, the intrinsic dimension $m=25$ in that case was chosen solely based on our investigation in Figure~\ref{fig:parameter_k_and_no_dim} (top row).

\subsection{Selecting $\alpha$ and $\beta$ for the Schroedinger and Transport eigenmaps}
The choices of $k$, $\sigma$ and $m$ are enough for all algorithms except the Schroedinger and Transport eigenmaps. The potential $V$ and function $\mu$ in those cases are naturally determined by the a priori knowledge, leaving only $\alpha$ which determines the strength of the potential in the Schroedinger eigenmap, and $\beta$ which controls the strength of the advection in the Transport eigenmap.

As described in the main article, for the Schroedinger eigenmaps, we introduced the parameter $\hat{\alpha}$ defined by $\alpha = \hat{\alpha} \cdot \tr(L) / \tr(V)$ to compare the impact of the Laplacian matrix and the potential in the algorithm. The result of the search on $\hat\alpha$ is shown on Figure~\ref{fig:parameter_alphahat_sch}.
\begin{figure}[H]
	\centering
	\includegraphics[scale = .17 ]{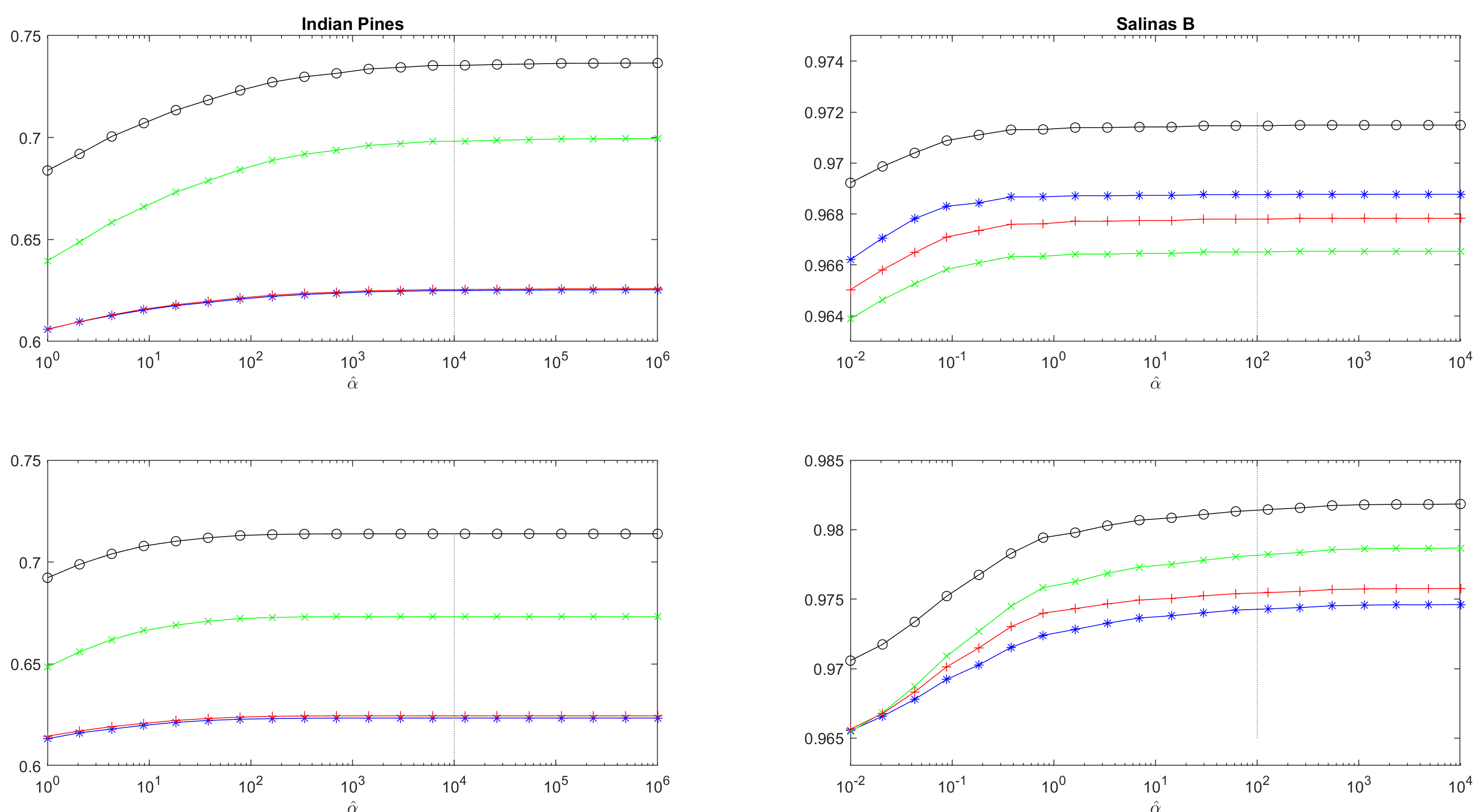}
	\caption{\footnotesize{Optimal analysis of the parameter $\hat{\alpha}$. For the Indian Pines data set (left), the potential is placed on class 11--soybean (top) then on class 2--corn (bottom). For the Salinas-B data set (right), the potential is placed on class 11--lettuce (top) then on class 10--corn (bottom). The following performance measures are reported: overall accuracy (black circles), average accuracy (blue stars), F-score (red pluses), and Cohen's kappa coefficient (green x's). The black dashed vertical lines represent our choice for the parameter $\hat{\alpha}$.}}
	\label{fig:parameter_alphahat_sch}
\end{figure}
 For transport by advection (TA), we searched for the optimal value of $\beta$, again for both Indian Pines and Salinas-B,
 on Figure~\ref{fig:parameter_alpha_tra}.
\begin{figure}[H]
	\centering
	\includegraphics[scale = .17 ]{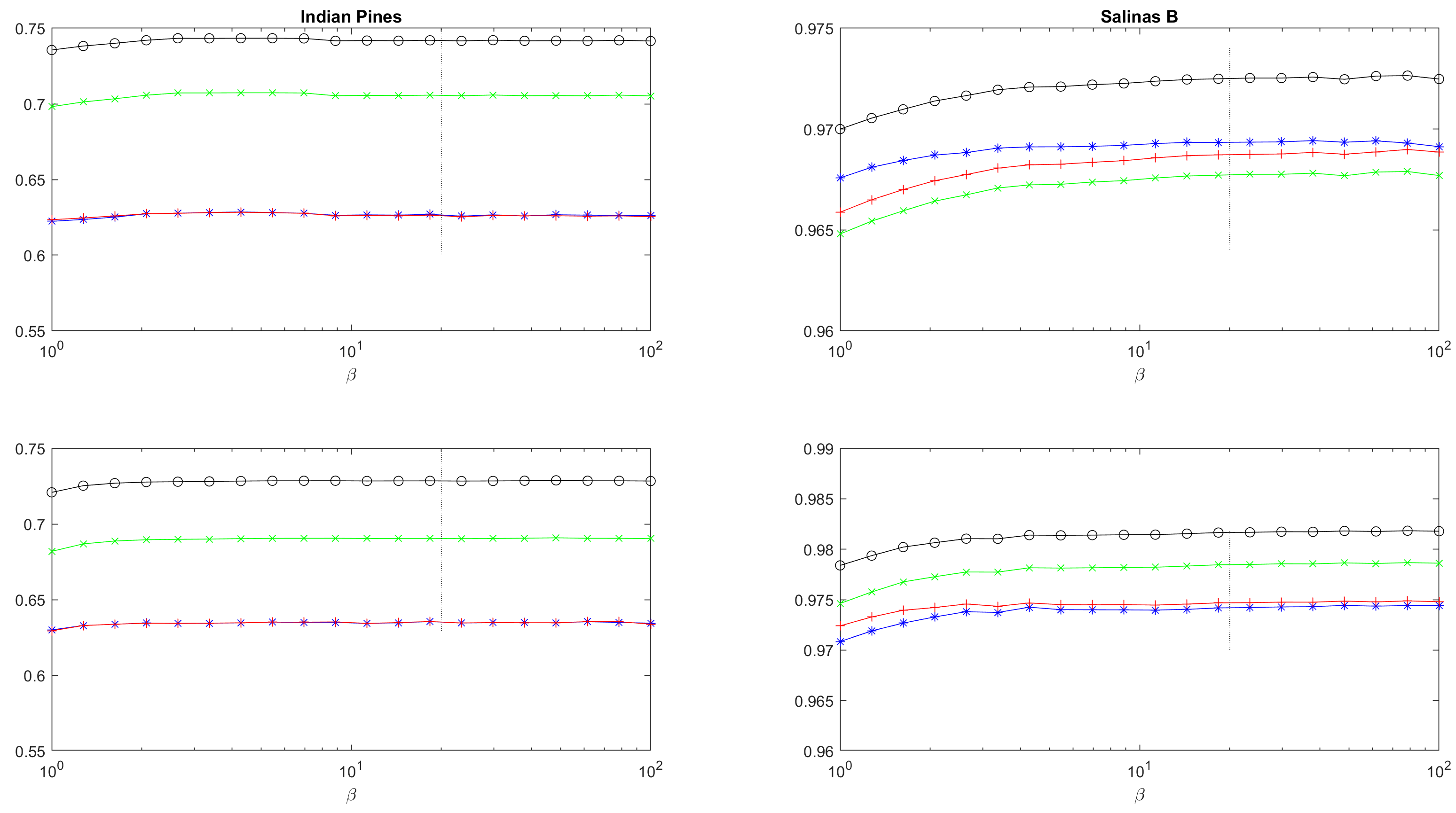}
	\caption{\footnotesize{Optimal analysis of the parameter $\beta$. For the Indian Pines data set (left), the advection is placed on class 11--soybean (top) then on class 2--corn (bottom). For the Salinas-B data set (right), the advection is placed on class 11--lettuce (top) then on class 10--corn (bottom). The following performance measures are reported: overall accuracy (black circles), average accuracy (blue stars), F-score (red pluses), and Cohen's kappa coefficient (green x's). The black dashed vertical lines represent our choice for the parameter $\beta$.}}
	\label{fig:parameter_alpha_tra}
\end{figure}

The same is done for transport gradient flow (TG).

\begin{figure}[H]
	\centering
	\includegraphics[scale = .17 ]{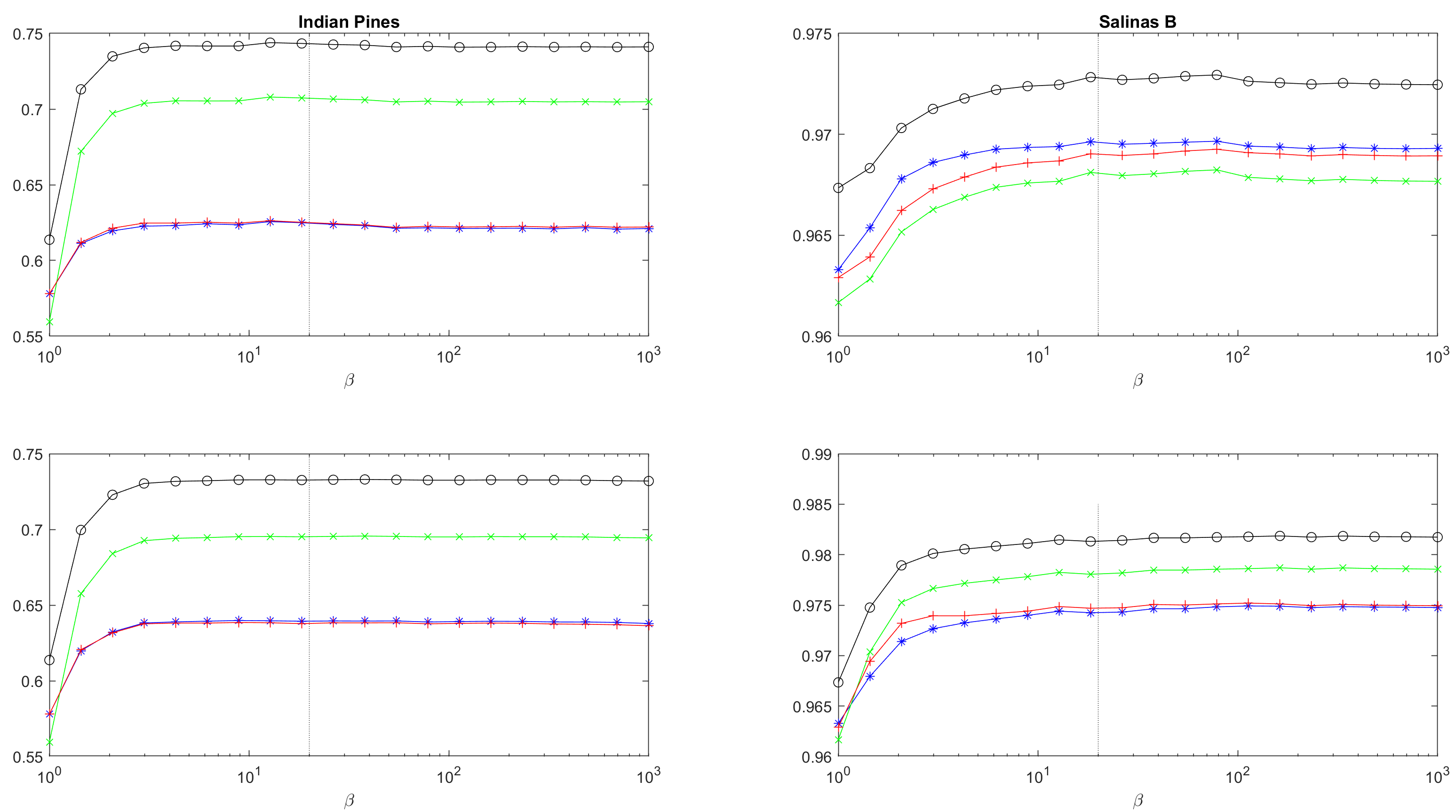}
	\caption{\footnotesize{Optimal analysis of the parameter $\beta$. For the Indian Pines data set (left), the known class is class 11--soybean (top) and then class 2--corn (bottom). For the Salinas-B data set (right), the known class is class 11--lettuce (top) and then class 10--corn (bottom). The following performance measures are reported: overall accuracy (black circles), average accuracy (blue stars), F-score (red pluses), and Cohen's kappa coefficient (green x's). The black dashed vertical lines represent our choice for the parameter $\beta$.}}
	\label{fig:parameter_alpha_t2}
\end{figure}

The investigation for both $\hat\alpha$ and $\beta$ show that the performance is not much impacted by the choice, provided that both are chosen large enough: Too small values of $\hat\alpha$ or $\beta$ do not allow to take advantage of the available prior information. 
Based on those results, we take $\hat{\alpha} = 10^{4}$ for the Indian Pines data set and $\hat{\alpha} = 10^{2}$ for the Salinas-B data set for SE, and using $\beta = 20$ for both the Indian Pines and the Salinas-B data set for TA and TG.
%
\subsection{Results: Classification maps\label{sec:results maps}}
We show here some graphical representations of the classification maps, on the Indian Pines dataset.
\begin{figure}[H]
	\centering
	\includegraphics[scale = .19 ]{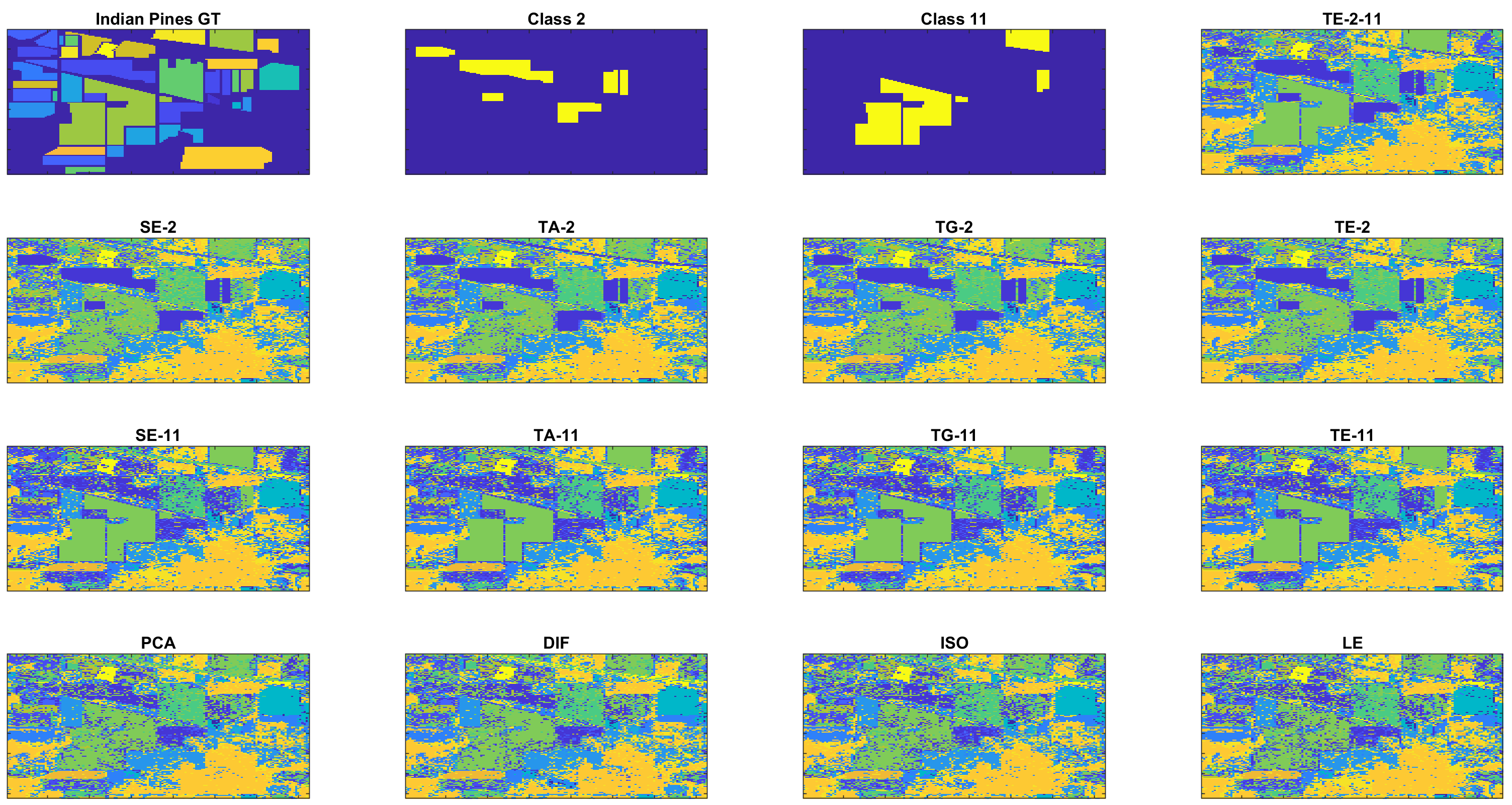}
	\caption{\footnotesize{Classification map of Indian Pines}}
	\label{figPines_map}
\end{figure}
Then on on the Salinas B dataset.
\begin{figure}[H]
	\centering
	\includegraphics[scale = .19 ]{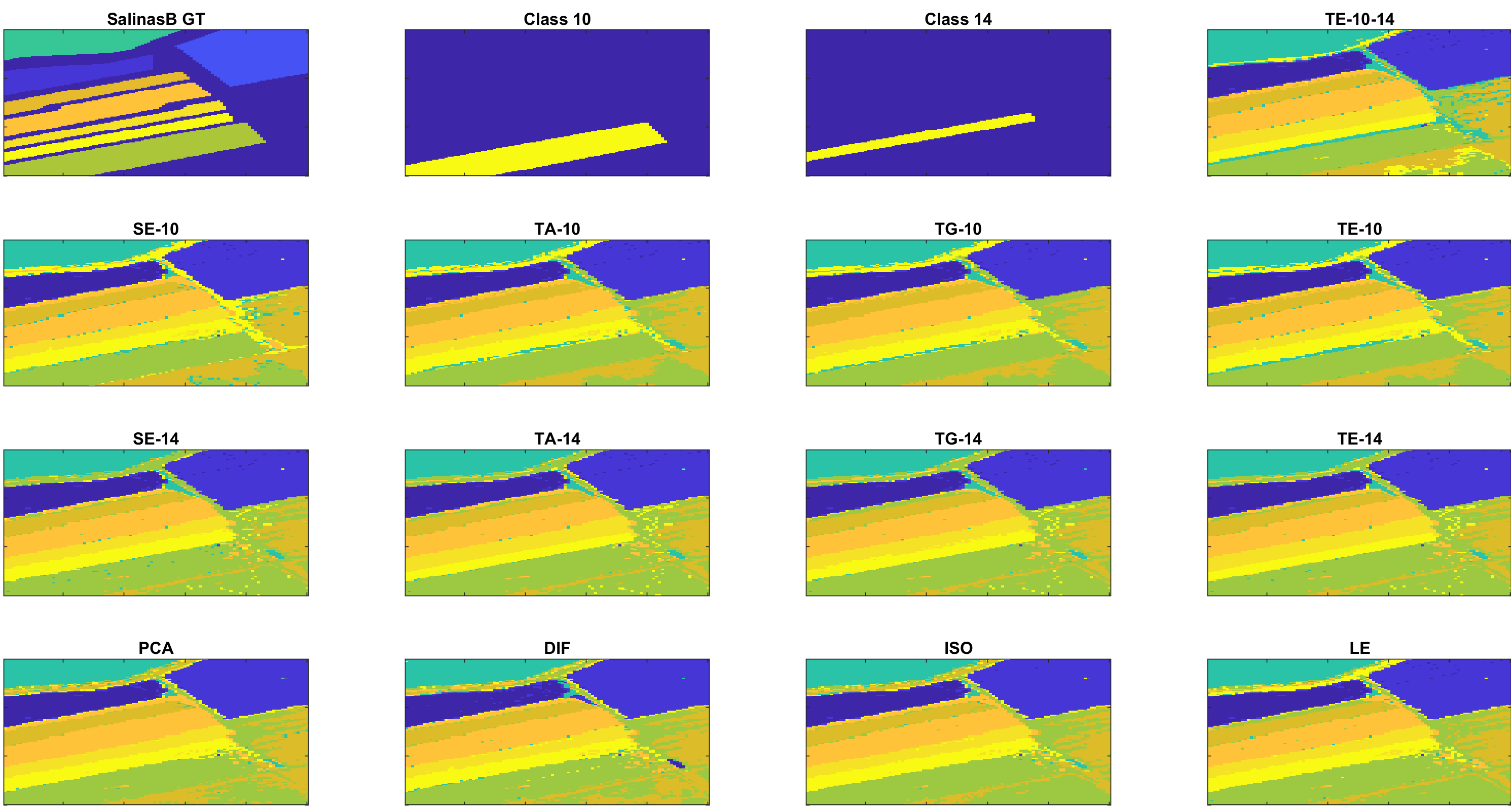}
	\caption{\footnotesize{Classification map of Salinas B}}
	\label{sup:figSalinasB_map}
\end{figure}
And finally on the grouped dataset where similar classes have been merged.
\begin{figure}[H]
	\centering
	\includegraphics[scale = .19 ]{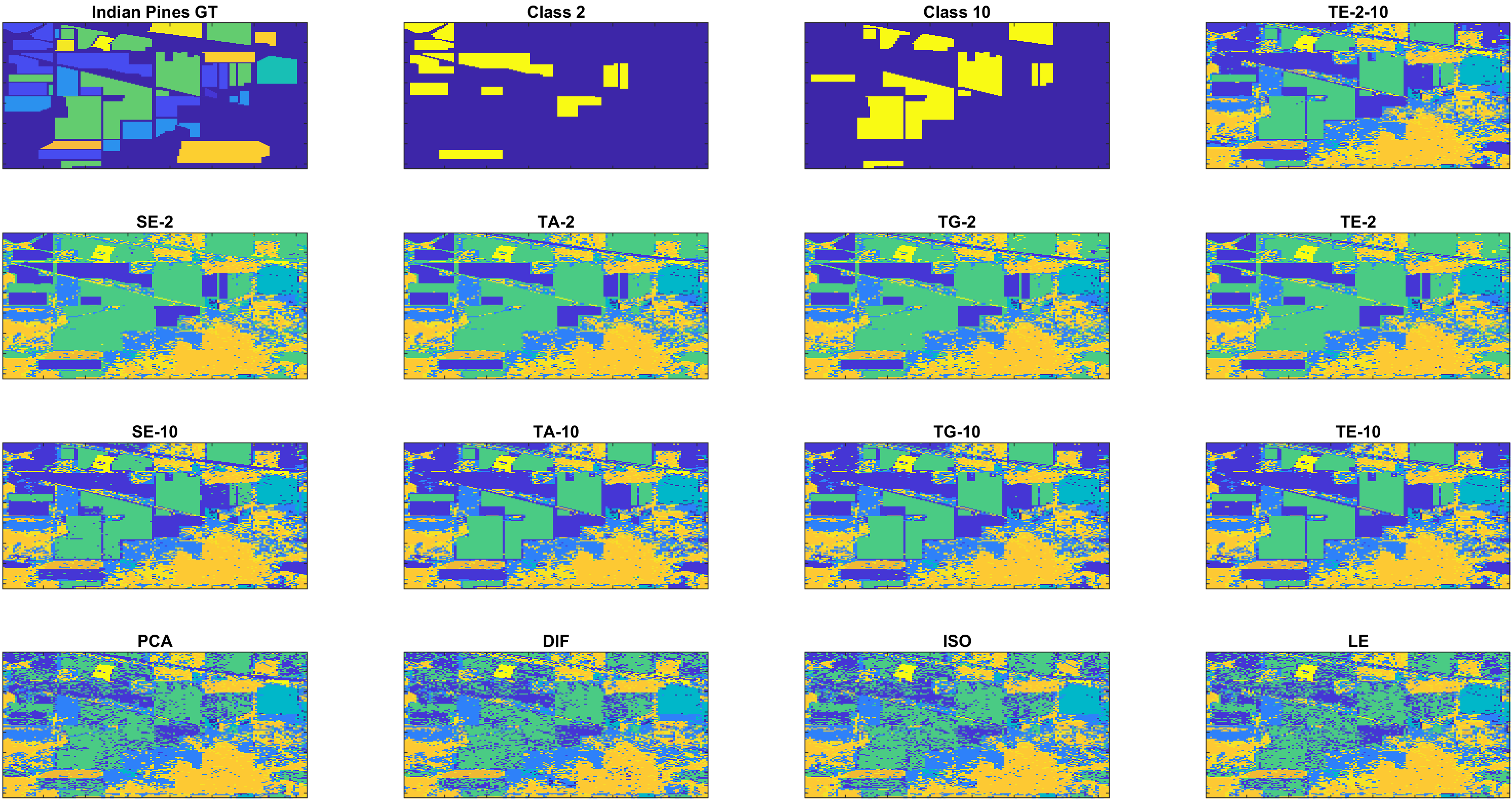}
	\caption{\footnotesize{Classification map of Grouped Indian Pines}}
	\label{sup:figPinesG_map}
\end{figure}

\begin{figure}[H]
	\centering
	\includegraphics[scale = .19 ]{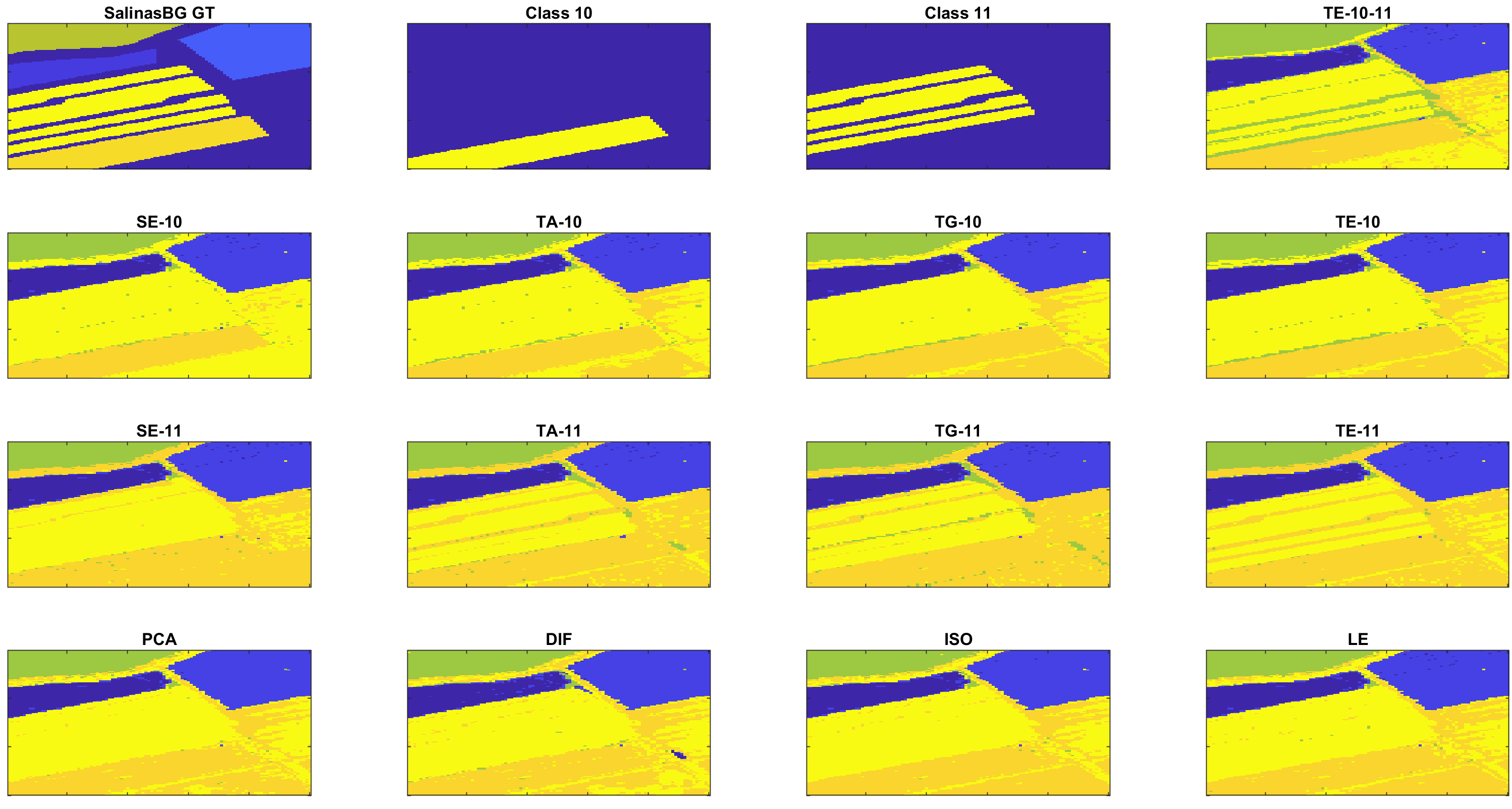}
	\caption{\footnotesize{Classification map of Grouped Salinas B}}
	\label{sup:figSalinasBG_map}
\end{figure}

\end{document}